\renewcommand{\nomgroup}[1]{%
\renewcommand{\makelabel}[1]{##1}
\item[~]

\ifthenelse{\equal{#1}{D}}{%
\item[\textbf{Distributions}]}{%

\ifthenelse{\equal{#1}{S}}{%
\item[\textbf{Sets and spaces}]}{%

\ifthenelse{\equal{#1}{F}}{%
\item[\textbf{Functions}]}{%

\ifthenelse{\equal{#1}{M}}{%
\item[\textbf{Miscellaneous}]}{%

\ifthenelse{\equal{#1}{O}}{%
\item[\textbf{Observations}]}{%

\ifthenelse{\equal{#1}{L}}{%
\item[\textbf{Losses and risks}]}{%

\ifthenelse{\equal{#1}{C}}{%
\item[\textbf{Cardinalities}]}{%

}}}}}}}%
\item[~]
\let\makelabel\nomlabel
}
\newcommand{\R}{\mathbb{R}}
\newcommand{\N}{\mathbb{N}}
\newcommand{\BorelXY}{{\mathfrak{B}_{\X\times\Y}}}
\newcommand{\BorelXnbY}{{\mathfrak{B}_{\Xb\times\Y}}}
\newcommand{\BorelX}{{\mathfrak{B}_{\X}}}
\newcommand{\BorelY}{{\mathfrak{B}_{\Y}}}
\newcommand{\Pb}{{{P}_{b}}}
\newcommand{\PbX}{{ {P}_b^X }}
\newcommand{\E}{{\mathbb{E}}}
\newcommand{\Y}{{\mathcal{Y}}}
\newcommand{\X}{{\mathcal{X}}}
\newcommand{\Xb}{{\mathcal{X}_{b}}}
\newcommand{\XI}{{\X_{I}}}
\newcommand{\frakDn}{{{D}_n}}                       
\newcommand{\frakDnb}{{{D}_{n,b}}}              
\newcommand{\calDn}{{\mathcal{D}_n}}                        
\newcommand{\calDnb}{{\mathcal{D}_{n,b}}}             
\newcommand{\calDN}{{\mathcal{D}_N}}                        
\newcommand{\Ind}{{\mathbf{1}}}
\newcommand{\bsumme}{{\sum\limits_{b=1}^{B}}}
\newcommand{\insumme}{{\sum\limits_{i=1}^n}}
\newcommand{\jnsumme}{{\sum\limits_{i=j}^n}}
\newcommand{\Isumme}{{ \sum\limits_{I \subset \btoB   }}}
\newcommand{\nb}{{n_b}}
\newcommand{\lambdab}{{\lambda_b}}
\newcommand{\lambdanb}{{\lambda_{(n_b,b)}}}
\newcommand{\fLPlambdacomp}{{f_{\Ls,P,\lambda}^{comp}}} 
\newcommand{\fLDnlambdacomp}{{f_{\Ls,\frakDn,\lambda}^{comp}}} 
\newcommand{\fLPschlangelambdacomp}{{f_{\Ls,\tilde P,\lambda}^{comp}}}
\newcommand{\fLPlambdancomp}{{f_{\Ls,P,\lambda_n}^{comp}}} 
\newcommand{\fLDnlambdancomp}{{f_{\Ls,\frakDn,\lambda_n}^{comp}}}
\newcommand{\Ls}{{ {L^*}   }}
\newcommand{\lokaltheoretisch}{{f_{b,\Ls,\Pb,\lambdab}}} 
\newcommand{\lokalempirisch}{{f_{b,\Ls,\frakDnb,\lambdab}}} 
\newcommand{\lokaltheoretischschlange}{{f_{b,\Ls,\tilde\Pb,\lambdab}}} 
\newcommand{\lokaltheoretischn}{{f_{b,\Ls,\Pb,\lambdanb}}} 
\newcommand{\lokalempirischn}{{f_{b,\Ls,\frakDnb,\lambdanb}}} 
\newcommand{\zusammentheoretischx}{{ {\bsumme w_b(x) \lokaltheoretisch(x) }}}
\newcommand{\zusammenempirischx}{{ {\bsumme w_b(x) \lokalempirisch(x) }}}
\newcommand{\zusammenempirischnx}{{ {\bsumme w_b(x) \lokalempirischn(x)  }}}
\newcommand{\nto}{{\ \xrightarrow[{\tiny n\to\infty}]{}\ }}
\newcommand{\mto}{{\ \xrightarrow[{\tiny m\to\infty}]{}\ }}
\newcommand{\ohne}{{\backslash}}
\newcommand{\btoB}{{\left\{1,\ldots,B\right\}}}
\DeclareMathOperator*{\arginf}{arg\,inf}
\newtheoremstyle{break}
	  {10pt}
 	  {10pt}
 	  {\itshape}
 	  {\parindent}
 	  {\bfseries}
 	  {}
 	  {\newline}
 	  {}
\theoremstyle{break}
\renewcommand{\bibname}{References}
\renewcommand{\refname}{References}
\newtheorem{stz}{Satz}[section]
\newtheorem{lem}[stz]{Lemma}
\newtheorem{bsp}[stz]{Example}
\newtheorem{bem}[stz]{Remark}
\newtheorem{thm}[stz]{Theorem} 
\newtheorem{prp}[stz]{Proposition}
\title{\textbf{Universal Consistency and Robustness of Localized Support Vector Machines}}
\author{\textbf{Florian Dumpert}\\
\normalsize{Department of Mathematics, University of Bayreuth, Germany}}
\date{}
\begin{document} 

\selectlanguage{english}

\maketitle
 
\begin{abstract}\noindent
The massive amount of available data potentially used to discover patters in machine learning is a challenge for kernel based algorithms with respect to runtime and storage capacities. Local approaches might help to relieve these issues. From a statistical point of view local approaches allow additionally to deal with different structures in the data in different ways. This paper analyses properties of localized kernel based, non-parametric statistical machine learning methods, in particular of support vector machines (SVMs) and methods close to them. We will show there that locally learnt kernel methods are universal consistent. Furthermore, we give an upper bound for the maxbias in order to show statistical robustness of the proposed method. 
\end{abstract}  
  
\noindent{\bf Key words and phrases.} Machine learning; universal consistency; robustness; localized learning; reproducing kernel Hilbert space.



\section{Introduction}


This paper analyses properties of localized kernel based, non-parametric statistical machine learning methods, in particular of support vector machines (SVMs) and methods close to them. Caused by the enormous research activities there is abundance of general introductions to this field of computer science and statistics. Beside many publications in international journals there are summarizing textbooks like for example \fullciteA{cristianini2000introduction}, \fullciteA{scholkopf2001learning}, \fullciteA{steinwart2008support} or \fullciteA{cucker2007learning} from a mathematical or statistical point of view. Nevertheless, we want to give a short overview over the analyzed topic.

Support vector machines were initially introduced by \fullciteA{boser1992training}  und \fullciteA{cortes1995support}, based on earlier work like the Russian original of \fullciteA{vapnik1979theorie}. The basic ideas are presented in a comprehensive way in \fullciteA{vapnik1995thenature, vapnik2000thenature2} and \fullciteA{vapnik1998statistical}. An early discussion is provided by \fullciteA{bennett2000support}. Although SVMs and related kernel based methods are much more recent then other very well-established statistical techniques like for example ordinary least squares regression or their related generalized linear models for regression and classification, they became pretty popular in many fields of science, see for example \fullciteA{ma2014support}.
The analysis provided by this paper usually refers to classification or regression problems and therefore to so called supervised learning. Beyond this, support vector machines are a suitable method for unsupervised learning (e.\,g. novelty detection), too.

The paper is organized as follows: Section~\ref{lmusvm} gives an overview on support vector machines, Section~\ref{local} introduces the idea of local approaches. The consistency and robustness results are provided in Section~\ref{results}, the proofs can be found in the appendix. Section~\ref{summary} summarizes the paper.

\section{Prerequisites}

\subsection{Learning Methods and Support Vector Machines}\label{lmusvm}

The aim of support vector machines in our context, i.\,e. in supervised learning, is to discover the influence of a (generally multivariate) input (or explanatory) variable $X$ on a univariate output (or response) variable $Y$. For generalizations see for example \fullciteA{micchelli2005learning} or \fullciteA{caponnetto2007optimal}. We are interested in exploring the functional relationship that describes the conditional distribution of $Y$ given $X$. Technically spoken there is a probability space $(\Omega, \mathcal A, Q)$ which, however, is not of our interest during the further analyses. It is just mentioned to give a technically complete definition of the setting. Fundamental notions of statistics like probability space, random variable, Borel-$\sigma$-algebra etc. will not be defined in this paper; we refer to \fullciteA{Hoffmann-Jorgensen2003VolI} instead. $\mathfrak B_M$ denotes the Borel-$\sigma$-algebra on a set $M$. We only use Borel-$\sigma$-algebras, i.\,e. a measurable set is a Borel-measurable set and a measurable function is measurable with respect to the Borel-$\sigma$-algebras. We consider random variables $X: (\Omega,\mathcal A) \to (\X,\BorelX)$ and $Y: (\Omega,\mathcal A) \to (\Y,\BorelY)$ and their joint distribution $P := (X,Y)\circ Q$ on $(\X\times\Y,\BorelXY)$. $\X$, the so called input space, is assumed as a separable metric space. For notions like metric spaces, separability, Polish spaces etc. we refer to \fullciteA{DunfordSchwartz1958}. The output space $\Y$ is assumed to be a closed subset of the real line $\R$. If $\Y$ is finite, the goal of supervised learning is classification, otherwise it is regression. Considering the ''process'' that, in a first step, the nature creates a realization $x = X(\omega)$ and, after that, in a second step, nature creates the corresponding $y = Y(\omega)$, we are interested in the conditional distribution of $Y$ given $X$ as mentioned above. Since $\Y$ is a closed subset of $\R$, it is a Polish space. Therefore, see \fullciteA[Theorem~10.2.1, p.~343f.; Theorem~10.2.2, p.~345]{Dudley2004}, there is a unique regular conditional distribution of $Y$ given $X=x$ and we can split up the joint distribution $P$ into the marginal distribution $P^X$ on $(\X,\BorelX)$ and the conditional distribution $P(\cdot|x) := P(\cdot|X=x)$. Note that $X$ needs not to be Polish, so no completeness assumptions need to be made on $\X$. 

When we talk about a data set, a sample or observed data, we think (for $n\in\N$) about an $n$-tuple $\mathcal D_n$ of i.i.d. observations, $$\mathcal D_n =  \left(  (x_1, y_1), \ldots, (x_n, y_n) \right) := \mathfrak D_n(\omega) := \left(  (X_1(\omega), Y_1(\omega)), \ldots, (X_n(\omega), Y_n(\omega)) \right) \in (\X\times\Y)^n$$  for $\mathfrak D_n : \Omega \to (\X\times\Y)^n$ the sample generating random variable. Later on we will also  allow the limit $n\to\infty$ in order to prove asymptotic properties of support vector machines. Note that, although it is a tuple, we treat it as a set and use notations like $\in, \cap, ...$. There is only one exception: We allow that the sample contains a data point twice or several times. 

Statistics is done to find a good prediction $f(x)$ of $y$ given a certain $x$. Here, $y$ is the label of the class (or rather its numerical code) in the case of classification, see e.\,g. \fullciteA{christmann2002classification}; or a rank in ordinal regression, see e.\,g. \fullciteA{herbrich1999support}; or a quantile, see e.\,g. \fullciteA{steinwart2011estimating}, the mean (or a value related to it as in the case of the $\varepsilon$-insensitve loss which additionally offers some sparsity properties, see e.\,g. \fullciteA{steinwart2009sparsity}) or expectile (\fullciteA{farooq2015svm}) of the conditional distribution of $Y$ given this certain $x$. Other aims may be ranking (\fullciteA{clemenccon2008ranking}, \fullciteA{agarwal2009generalization}), metric and similarity learning (\fullciteA{mukherjee2006learning}, \fullciteA{xing2003distance}, \fullciteA{cao2016generalization}) or minimum entropy learning (\fullciteA{hu2013learning}, \fullciteA{fan2016consistency}). For $n\in\N$ a function $\mathcal L : (\X\times\Y)^n \to \left\{f:\X\to\R\ |\ f \text{\ measurable}\right\}$ which maps the sample $\mathcal D_n$ at hand to a predictor $f_{\mathcal D_n}$ is called a statistical learning method. Obviously, we are interested in meaningful learning methods which lead to good predictions. It is necessary to define more precisely what a good prediction is. Therefore we present the well-known approach via loss functions and risks. The duty of a loss function is to compare a predicted value with its true counterpart. For different tasks in classification and regression there are different loss functions proposed and discussed, see \fullciteA{rosasco2004loss}, \fullciteA{Steinwart2007} and \fullciteA[Chapter~2, Chapter~3]{steinwart2008support}. Formally, a supervised loss function (or shorter: a loss function) is defined as a measurable function $L:\Y\times\R \to [0,\infty[$. For unsupervised learning a slightly different definition is needed. Since this paper has its focus on supervised learning, we drop the more general definition of loss functions and limit the definition to the presented one. For some technical reasons
 we are also interested in the so called shifted version $\Ls$ of a loss function $L$, defined by $\Ls:\Y\times \R \to \R, \ \Ls(y,t) := L(y,t) - L(y,0)$, see also Appendix \ref{shiftanhang}. If a prediction is exact, we claim the loss function $L$ to be 0, i.\,e. $L(y,y) = 0$ for all $y\in\Y$. All of the common loss functions fulfill this requirement. As the only information about the underlying distribution $P$ is given by the sample $\mathcal D_n$ we cannot expect to find a predictor $f_{\mathcal D_n}$ which provides $L(y,f_{\mathcal D_n}(x)) = 0$ for all $x\in\X, y\in\Y$. This might be possible for all $(x_i,y_i)$ , $i=1,\ldots,n$,  contained in $\mathcal D_n$. But a learning method which provides this property is obviously vulnerable to overfitting --- and overfitting is not desirable when we think about future predictions or slight measurement errors in the sample. Therefore, we want to find a predictor which minimizes the average loss instead. As we are interested in the whole population and not only in the sample, we want to minimize the average loss over all possible $x\in\X$ and all $y\in\Y$. This average loss is called the risk over $\X$ of a measurable predictor $f$ with respect to the chosen loss function $L$ and the unknown underlying distribution $P$ and is formally defined by
$$\mathcal R_{\X,L,P} :  \left\{f:\X\to\R\ |\ f \text{\ measurable}\right\}  \to \R, \ \mathcal R_{\X,L,P}(f) := \int\limits_{\X\times\Y} L(y,f(x))\ dP(x,y).$$
If we use the shifted version of $L$, the definition remains valid:
$$\mathcal R_{\X,\Ls,P}(f) := \int\limits_{\X\times\Y} L(y,f(x)) - L(y,0)\ dP(x,y).$$
Even if all situations were known, we cannot expect that the risk of a measurable predictor with respect to $L$ and $P$ will be 0. Therefore we have to compare our learning method to the best, i.\,e. to the minimal risk which we can reach by using a measurable predictor. For technical reasons (we use integrals) we have to assume measurability. So the best reachable risk is
$$\mathcal R_{\X,L,P}^* := \textup{inf}\left\{\mathcal R_{\X,L,P}(f)\ |\ f:\X\to\R \text{ measurable}\right\},$$
which is called the Bayes risk on $\X$ with respect to $L$ and $P$. Once again, it is possible to define the Bayes risk for the shiftet version of the loss function $L$, too:
$$\mathcal R_{\X,\Ls,P}^* := \textup{inf}\left\{\mathcal R_{\X,\Ls,P}(f)\ |\ f:\X\to\R \text{ measurable}\right\}.$$
It might also be necessary to have a view on the best risk over a smaller class of functions. If $\mathcal F$ is a subset of the measurable functions from $\X$ to $\R$, we define
$$\mathcal R_{\X,L,P,\mathcal F}^* := \textup{inf}\left\{\mathcal R_{\X,L,P}(f)\ |\ f\in\mathcal F\right\}$$
and
$$\mathcal R_{\X,\Ls,P, \mathcal F}^* := \textup{inf}\left\{\mathcal R_{\X,\Ls,P}(f)\ |\ f\in\mathcal F\right\}.$$
If we want to take the mean not over $\X$ but on a measurable subset $\Xi$, the notation will be analogue, for example:
$$\mathcal R_{\Xi,L,P}(f) := \int\limits_{\Xi\times\Y} L(y,f(x))\ dP(x,y).$$

Motivated by the law of large numbers, we use the information contained in the sample to approximate the abovementioned risks. Let
$$D_n := \frac 1 n \sum\limits_{i=1}^n \delta_{(x_i,y_i)}$$
be the empirical distribution based on $\mathcal D_n$ where $\delta_{(x_i,y_i)}$ is the Dirac measure at $(x_i,y_i) \in \X\times\Y$. Note that this empirical distribution has a random aspect since the sample $\mathcal D_n$ is a realization of random variables. By using the empirical measure we can define the empirical risk
$$\mathcal R_{\X, L, D_n}(f) = \frac 1 n \sum\limits_{i=1}^n L(y_i, f(x_i)).$$
If we consider a measurable subset $\Xi$ of $\X$, we get
$$\mathcal R_{\X, L, D_n}(f) = \frac 1 {|\mathcal D_n \cap \Xi|} \sum\limits_{(x_i,y_i)\in\mathcal D_n \cap \Xi} L(y_i, f(x_i)),$$
where $|M|$ denotes the number of elements of a finite set $M$.
To use the information in the sample the SVM is learnt on minimizing the empirical risk. To avoid overfitting, we have to control the complexity of the predictor. Therefore we add a regularization term $p(\lambda, f)$ where $\lambda > 0 $ stands for the influence of the regularization term. In this paper, $p(\lambda, f) := \lambda \|f\|_H^2$ will always be used to avoid overfitting. There are several other regularization terms proposed in literature, in particular for linear support vector machines: for example $\ell_1$-regularization when there is a special view on sparsity, see \fullciteA{zhu20041}; or elastic nets, see \fullciteA{zou2005regularization}, \fullciteA{wang2006doubly}, \fullciteA{de2009elastic}. Other types of regularization, like $\lambda\|f\|_H^q$ for some $q\ge1$ are also possible. Note that $\lambda$ might and should depend on $n$. In case of support vector machines in this paper, $H$ is a so called reproducing kernel Hilbert space of measurable functions. We will have a closer look to reproducing kernel Hilbert spaces later on. Summarizing this, we want to
$$\textup{minimize\ \ } \mathcal R_{\X,L,D_n,\lambda_n} := \frac 1 n \sum\limits_{i=1}^n L(y_i,f(x_i)) + \lambda_n \|f\|_H^2$$
or
$$\textup{minimize\ \ } \mathcal R_{\X,\Ls,D_n,\lambda_n} := \frac 1 n \sum\limits_{i=1}^n \Ls(y_i,f(x_i)) + \lambda_n \|f\|_H^2,$$
based on a sample $D_n$ of observations created by $P$.
Hence, we want to find the empirical support vector machine
$$f_{\Ls,D_n,\lambda_n} := \underset{f\in H}{\arginf} \ \  \frac 1 n \sum\limits_{i=1}^n \Ls(y_i,f(x_i)) + \lambda_n \|f\|_H^2.$$

In practice, the choice of the loss function is determined by the application at hand. On the other side, it is not obvious how to choose the right reproducing Hilbert space. Due to  the  bijection between kernels and their reproducing kernel Hilbert spaces (RKHS) we can reduce the choice of the RKHS to the choice of a suitable kernel.  Let us discuss this a bit more precisely.

Support vector machines (SVMs) and other kernel based machine learning methods need a theoretical setting which is provided by reproducing kernel Hilbert spaces. An introduction and the general theory can be found in \fullciteA{aronszajn1950}, \fullciteA{scholkopf2001learning} and \fullciteA{berlinet2001}. For the purpose of our explanation we repeat some of the basic definitions and results by referring to these references. A kernel is a function $k:\X\times\X \to \R, \ \ (x,x^\prime) \mapsto k(x,x^\prime)$ which is symmetric, i.\,e. $k(x,x^\prime) = k(x^\prime,x)$ for all $x,x^\prime \in \X$, and positive semi-definite, i.\,e. for all $n\in\N:$ $\insumme \jnsumme \alpha_i \alpha_j k(x_i,x_j) \ge 0$ for all $\alpha_1,\ldots,\alpha_n \in \R$ and $x_1,\ldots,x_n \in\X$. It measures the similarity of its two arguments. For theoretical aspects, it is possible and it might be useful to define kernels as complex valued functions, too. 
A kernel $k$ is called a reproducing kernel of a Hilbert space $H$ if $k(\cdot, x) \in H$ for all $x\in\X$ and $f(x) = \langle f,k(\cdot,x)\rangle_H$ for all $x\in\X$ and all $f\in H$.  $\langle \cdot,\cdot\rangle_H$ denotes the inner product of $H$, $\|\cdot\|_H$ denotes the Hilbert space norm of $H$. In this case, $H$ is called the reproducing Hilbert space (RKHS) of $k$. Details about the bijection between kernels and their RKHS see \fullciteA[Moore-Aronszajn Theorem, p.~19]{berlinet2001}.
One of the most important inequalities in reproducing kernel Hilbert spaces for our purposes is given by the following well-known proposition. A proof is shown in \fullciteA[p. 124]{steinwart2008support}.

\begin{prp}\label{dieungleichungschlechthin}
A kernel $k$ is called  bounded if $||k||_\infty := \sup_{\substack{x\in\X}} \ \sqrt{k(x,x)} \ < \infty.$ If and only if the reproducing kernel $k$ of an RKHS $H$ is bounded every $f\in H$ is bounded and for all $f\in H, x\in \X$ there is the inequality
$$|f(x)| = |\langle f, k(\cdot, x)\rangle_H| \le ||f||_H ||k||_\infty.$$ Particularly:
\begin{align}\label{dieungleichung}
 ||f||_\infty\le ||f||_H ||k||_\infty.
\end{align}
\end{prp}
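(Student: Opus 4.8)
The plan is to prove the two implications of the equivalence separately: the ``$k$ bounded $\Rightarrow$ everything'' direction is an immediate consequence of the reproducing property together with Cauchy--Schwarz, while the converse rests on the uniform boundedness principle applied to the Hilbert space $H$.

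First I would treat the forward direction. Assume $\|k\|_\infty < \infty$ and fix $x \in \X$. Since $k$ is a reproducing kernel, $k(\cdot,x) \in H$, and applying the reproducing property to the function $f = k(\cdot,x)$ gives $\|k(\cdot,x)\|_H^2 = \langle k(\cdot,x), k(\cdot,x)\rangle_H = k(x,x) \le \|k\|_\infty^2$. Now let $f \in H$ be arbitrary. The reproducing property yields $f(x) = \langle f, k(\cdot,x)\rangle_H$, and Cauchy--Schwarz combined with the preceding estimate gives $|f(x)| = |\langle f, k(\cdot,x)\rangle_H| \le \|f\|_H \, \|k(\cdot,x)\|_H \le \|f\|_H \, \|k\|_\infty$. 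As the right-hand side is independent of $x$, taking the supremum over $x \in \X$ proves both $\|f\|_\infty \le \|f\|_H \, \|k\|_\infty < \infty$, i.e.\ inequality~(\ref{dieungleichung}), and the boundedness of every $f \in H$.

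For the converse I would assume that every $f \in H$ is bounded and deduce $\|k\|_\infty < \infty$. For each $x \in \X$ consider the evaluation functional $\delta_x : H \to \R$, $\delta_x(f) := f(x) = \langle f, k(\cdot,x)\rangle_H$; it is linear, and by Riesz representation (or the computation above) it is bounded with operator norm $\|\delta_x\| = \|k(\cdot,x)\|_H = \sqrt{k(x,x)}$. By hypothesis, for each fixed $f \in H$ we have $\sup_{x\in\X} |\delta_x(f)| = \|f\|_\infty < \infty$, so the family $(\delta_x)_{x\in\X}$ is pointwise bounded on the Banach space $H$. The uniform boundedness principle then yields $\sup_{x\in\X} \|\delta_x\| < \infty$, that is, $\|k\|_\infty = \sup_{x\in\X} \sqrt{k(x,x)} < \infty$.

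The step I expect to require the most care is this converse implication: pointwise boundedness of each individual $f \in H$ is a statement ``for each $f$'', so one genuinely needs Banach--Steinhaus (using completeness of $H$) rather than an elementary pointwise argument. It is also worth noting that the displayed inequality $|f(x)| \le \|f\|_H\|k\|_\infty$ holds trivially when $\|k\|_\infty = \infty$, so the actual content of that half of the equivalence is precisely the implication ``all $f \in H$ bounded $\Rightarrow$ $k$ bounded''. Everything else in the argument is routine.
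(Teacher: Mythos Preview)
Your argument is correct and is exactly the standard proof: Cauchy--Schwarz plus the reproducing property for the forward direction, and the uniform boundedness principle applied to the evaluation functionals $\delta_x$ for the converse. The paper does not supply its own proof of this proposition but simply refers to \cite[p.~124]{steinwart2008support}, where precisely this argument is given; so there is nothing to compare and your proposal matches the intended reasoning.
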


To get a reasonable statistical method, one of the main goals is universal consistency, i.\,e. to be able to prove that 
$$\mathcal R_{\X,\Ls,P} (f_{\Ls,D_n,\lambda_n}) \nto \mathcal R_{\X,\Ls,P}^*\ \ \ \text{in probability w.r.t.}\ P.$$
Support vector machines fulfill this property under week assumptions. For the situations examined in this paper (introduced above) we basically refer to \fullciteA[Theorem~8, p.~315]{christmann2009consistency}. Proofs for consistency in other situations can be found in the abovementioned publications, for example in \fullciteA{fan2016consistency} for the case of minimum error entropy, or in \fullciteA{christmann2012consistency} in the case of additive models.  

Within the next paragraphs, we will recall some definitions and results. A loss function $L$ is called (strictly) convex, if $t\mapsto L(x,y,t)$ is (strictly) convex for all $(x,y)\in\X\times\Y.$ Its shifted version $\Ls$ is called (strictly) convex, if $t\mapsto \Ls(x,y,t)$ is (strictly) convex for all $(x,y)\in\X\times\Y.$ $L$ is called Lipschitz continuous, if there is a constant $|L|_1\in [0,\infty[$ such that for all $(x,y)\in\X\times\Y$ and all $t, s\in \R$, $|L(x,y,t) - L(x,y,s)|\le|L|_1|t-s|$. Analogously, $\Ls$ is called Lipschitz continuous, if there is a constant $|\Ls|_1\in [0,\infty[$ such that for all $(x,y)\in\X\times\Y$ and all $t, s\in \R$, $|\Ls(x,y,t) - \Ls(x,y,s)|\le|\Ls|_1|t-s|$. 

Note that $f_{\X, \Ls,P,\lambda} = f_{\X, L,P,\lambda}$ if $\mathcal R_{\X, L,P}(0) < \infty$. In this case, it is superfluous to work with $\Ls$ instead of $L$, see  \fullciteA[p.~314]{christmann2009consistency}.

As shown in \fullciteA[p.~314, Propositions~2 and~4]{christmann2009consistency} we get the following proposition.

\begin{prp}\label{endlichkeiundlipschitzt}
\begin{enumerate}[label={(\alph*)}]
\item If $L$ is a loss function which is (strictly) convex, then $\Ls$ is (strictly) convex.
\item If $L$ is a loss function which is Lipschitz continuous, then $\Ls$ is Lipschitz continuous with the same Lipschitz constant.
\item If $L$ is a Lipschitz continuous loss function and $f\in L^1(P^X)$, then $-\infty<\mathcal R_{\X,\Ls, P}(f) <\infty$.
\item If $L$ is a Lipschitz continuous loss function and $f\in L^1(P^X)\cap H$, then $\mathcal R_{\X,\Ls, P, \lambda}(f)>-\infty$ for all $\lambda>0$. 
\end{enumerate}
\end{prp}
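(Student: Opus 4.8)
The plan is to treat the four claims separately, each reducing via the defining identity $\Ls(y,t) = L(y,t) - L(y,0)$ to a short pointwise estimate; the author in fact cites \citeA[Propositions 2 and 4]{christmann2009consistency}, but the arguments are elementary enough to reproduce directly.

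For (a) and (b) I would fix $(x,y)\in\X\times\Y$ and note that, as a function of the last argument, $\Ls(x,y,\cdot)$ is obtained from $L(x,y,\cdot)$ by subtracting the constant $L(x,y,0)$. Adding a constant preserves (strict) convexity, which gives (a). For (b), in the difference $\Ls(x,y,t)-\Ls(x,y,s)$ the two copies of $L(x,y,0)$ cancel, so $|\Ls(x,y,t)-\Ls(x,y,s)| = |L(x,y,t)-L(x,y,s)| \le |L|_1|t-s|$, which is Lipschitz continuity of $\Ls$ with the same constant $|\Ls|_1 = |L|_1$.

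For (c), the key step is the pointwise bound $|\Ls(y,f(x))| \le |L|_1\,|f(x)|$, which follows from (b) with $s=0$ together with $\Ls(y,0)=0$. The map $(x,y)\mapsto\Ls(y,f(x))$ is measurable as a composition of the measurable $L$, $f$ and the coordinate projections, and it is dominated by the $P$-integrable function $|L|_1\,|f|$ because $f\in L^1(P^X)$. Integrating the bound yields $\int_{\X\times\Y}|\Ls(y,f(x))|\,dP(x,y)\le |L|_1\,\|f\|_{L^1(P^X)}<\infty$, so the integral defining $\mathcal R_{\X,\Ls,P}(f)$ is absolutely convergent, hence a finite real number, i.e.\ $-\infty<\mathcal R_{\X,\Ls,P}(f)<\infty$. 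For (d) I would simply decompose $\mathcal R_{\X,\Ls,P,\lambda}(f)=\mathcal R_{\X,\Ls,P}(f)+\lambda\|f\|_H^2$: the first term is finite by (c) (which uses only $f\in L^1(P^X)$), and the second is nonnegative since $\lambda>0$ and $\|f\|_H<\infty$ for $f\in H$; hence the sum is $>-\infty$.

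The only point requiring care — and it is minor — is in (c): one should avoid splitting $\int\bigl(L(y,f(x))-L(y,0)\bigr)\,dP$ into two separate integrals, since $L(y,f(x))$ and $L(y,0)$ need not be individually $P$-integrable; working directly with the single dominating bound $|L|_1\,|f|$ sidesteps this. I do not expect any genuine obstacle otherwise.
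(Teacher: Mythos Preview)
Your proposal is correct and matches the paper's treatment: the proposition is not proved in the text but merely attributed to \citeA[Propositions~2 and~4]{christmann2009consistency}, and the elementary pointwise arguments you give---in particular the bound $|\Ls(y,f(x))|\le |L|_1\,|f(x)|$ for (c), which also reappears in Appendix~\ref{shiftanhang}---are exactly the ones underlying that reference. Your remark about not splitting the integral in (c) is a sensible precaution and consistent with how the bound is handled in the appendix.
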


\begin{prp}
The empirical SVM with respect to $\mathcal R_{\X,L,\frakDn,\lambda}$ and also the empirical SVM with respect to $\mathcal R_{\X,\Ls,\frakDn,\lambda}$    exists and is unique for every $\lambda\in\ ]0,\infty[$ and every data set $\calDn\in (\X\times\Y)^n$ if $L$ is convex, see \fullciteA[Theorem 5.5, p.~168]{steinwart2008support}, and, with respect to $\mathcal R_{\X,\Ls,\frakDn,\lambda}$, the fact that for a given data set $\calDn$ the obviously finite additional term $\frac 1 n \sum\limits_{i=1}^n L(x_i,y_i,0)$ is a constant, see \fullciteA[p. 313]{christmann2009consistency}.

The theoretical SVM exists and is unique  for every $\lambda\in\ ]0,\infty[$ if $L$ is a Lipschitz continuous and convex loss function and $ H\subset L^1 (P^X)$ is the RKHS of a bounded measurable kernel, see \fullciteA[Lemmas 5.1 and 5.2, p.~166f.]{steinwart2008support} and \fullciteA[Theorems~5 and~6, p.~314]{christmann2009consistency}.
\end{prp}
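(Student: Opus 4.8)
The statement collects standard facts about minimizing a strictly convex, coercive functional over the reflexive Hilbert space $H$, so the plan is to check finiteness, (strict) convexity, coercivity and weak lower semicontinuity of the relevant objective and then apply the direct method of the calculus of variations; uniqueness then comes for free from strict convexity.

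\textbf{The empirical SVM.} First I would regard $G(f):=\mathcal R_{\X,L,\frakDn,\lambda}(f)=\frac1n\sum_{i=1}^n L(y_i,f(x_i))+\lambda\|f\|_H^2$ as a map $H\to[0,\infty[$. Each point evaluation $f\mapsto f(x_i)=\langle f,k(\cdot,x_i)\rangle_H$ is a bounded linear functional (no boundedness of $k$ is needed here, only $k(\cdot,x_i)\in H$), so $f\mapsto L(y_i,f(x_i))$ is a convex function $L(y_i,\cdot):\R\to[0,\infty[$ composed with a continuous linear map; being finite-valued and convex on all of $\R$, $L(y_i,\cdot)$ is automatically continuous, hence $G$ is finite, continuous and convex on $H$, and strictly convex because $\lambda\|\cdot\|_H^2$ is strictly convex for $\lambda>0$ (parallelogram law). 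Coercivity is immediate from $G(f)\ge\lambda\|f\|_H^2$. Since $H$ is reflexive, sublevel sets of $G$ are bounded and weakly sequentially compact, and a convex norm-continuous functional is weakly lower semicontinuous, so $G$ attains its infimum; strict convexity makes the minimizer unique. For the shifted objective I would simply note that $\mathcal R_{\X,\Ls,\frakDn,\lambda}(f)=\mathcal R_{\X,L,\frakDn,\lambda}(f)-\frac1n\sum_{i=1}^n L(y_i,0)$ differs from $G$ by the finite constant $\frac1n\sum_i L(y_i,0)$, so it has the same unique minimizer (alternatively, $\Ls$ is convex by Proposition~\ref{endlichkeiundlipschitzt}(a) and one repeats the argument verbatim). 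A representer-theorem reduction to $\operatorname{span}\{k(\cdot,x_1),\dots,k(\cdot,x_n)\}$ is possible but not necessary.

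\textbf{The theoretical SVM.} Here the extra difficulty is that $\int L(y,f(x))\,dP$ need not be finite, which is exactly the reason for passing to $\Ls$. Using that $L$ is Lipschitz continuous, $\Ls$ is Lipschitz with the same constant $|L|_1$ (Proposition~\ref{endlichkeiundlipschitzt}(b)), and boundedness of $k$ gives $\|f\|_\infty\le\|f\|_H\|k\|_\infty$ for $f\in H$ (Proposition~\ref{dieungleichungschlechthin}); hence $|\Ls(y,f(x))|=|\Ls(y,f(x))-\Ls(y,0)|\le|L|_1|f(x)|\le|L|_1\|k\|_\infty\|f\|_H$, and since $P$ is a probability measure this yields $|\mathcal R_{\X,\Ls,P}(f)|\le|L|_1\|k\|_\infty\|f\|_H<\infty$, so $\mathcal R_{\X,\Ls,P,\lambda}$ is real-valued on $H$ (cf.\ also Proposition~\ref{endlichkeiundlipschitzt}(c),(d); note that $k$ bounded plus $P^X$ a probability measure already forces $H\subset L^1(P^X)$). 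Convexity of $f\mapsto\int L(y,f(x))\,dP$ follows from convexity of the integrand in $f$ and linearity of the integral, so $\mathcal R_{\X,\Ls,P,\lambda}$ is strictly convex; coercivity follows from $\mathcal R_{\X,\Ls,P,\lambda}(f)\ge\lambda\|f\|_H^2-|L|_1\|k\|_\infty\|f\|_H\to\infty$; and if $f_m\to f$ in $H$ then $\|f_m-f\|_\infty\to0$, so the Lipschitz bound gives $\int\Ls(y,f_m(x))\,dP\to\int\Ls(y,f(x))\,dP$, making $\mathcal R_{\X,\Ls,P,\lambda}$ continuous and hence, being convex, weakly lower semicontinuous. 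The direct method on $H$ again produces a minimizer, unique by strict convexity; and $f_{\X,L,P,\lambda}=f_{\X,\Ls,P,\lambda}$ whenever $\mathcal R_{\X,L,P}(0)<\infty$, again by the constant-shift argument.

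\textbf{Main obstacle.} The only genuinely non-trivial point is the existence step in infinite dimensions: coercivity together with norm lower semicontinuity does not by itself force the infimum to be attained, and one really has to use reflexivity of $H$ (so that bounded sequences have weakly convergent subsequences) together with the fact that a convex, norm-continuous functional is weakly lower semicontinuous. Once the kernel-boundedness and Lipschitz estimates of Propositions~\ref{dieungleichungschlechthin} and~\ref{endlichkeiundlipschitzt} are in place, the remaining verifications (finiteness, strict convexity, coercivity, and the reduction from $L$ to $\Ls$) are routine.
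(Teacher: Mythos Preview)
Your argument is correct. Note, however, that the paper does not supply a separate proof for this proposition at all: the justification is contained entirely in the inline citations to \fullciteA[Theorem~5.5, Lemmas~5.1 and~5.2]{steinwart2008support} and \fullciteA[Theorems~5 and~6, p.~313f.]{christmann2009consistency}, together with the constant-shift remark for $\Ls$. What you have written is essentially a self-contained unpacking of exactly those cited results: the proofs there also proceed by verifying convexity and (norm-)continuity of the risk functional (this is the content of Lemmas~5.1 and~5.2), adding the strictly convex regularizer, and invoking the standard existence result for strictly convex, continuous, coercive functionals on a reflexive space. So your route is not genuinely different from the one the paper relies on---it is the same argument made explicit rather than delegated to the references.
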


\subsection{Localized approaches and regionalization}\label{local}

The idea of localized statistical learning is not new. Early theoretical investigations are already given by \fullciteA{bottou1992local} and \fullciteA{vapnik1993local}. As pointed out in \fullciteA{hable2013universal} there is a need to have a closer look at localized approaches of machine learning algorithms like support vector machines and related methods. With regard to big data the massive amount of available data potentially used to discover patters is a challenge for algorithms with respect to runtime and storage capacities. Local approaches might help to relieve these issues and are already proposed as experimental studies, see e.\,g. \fullciteA{bennett1998support}, \fullciteA{wu1999large} and \fullciteA{chang2010tree} who use decision trees; for using $k$-nearest neighbor (KNN) samples see \fullciteA{zhang2006svm}, \fullciteA{blanzieri2007instance}, \fullciteA{blanzieri2008nearest} or \fullciteA{segata2010fast}; \fullciteA{cheng2007localized}, \fullciteA{cheng2010efficient}, \fullciteA{gu2013clustered} use KNN-clustering methods to localize the learning. Furthermore, it seems to be straightforward, to parallelize the calculations for different local ares. From a statistical point of view there is another motivation to have a closer look at localized approaches. Different local areas of $\X\times\Y$ might have different claims on the statistical method. For example, there might be a region that requires a simple function serving as predictor for the class or the regression value; another region might need a very volatile function. Global machine learning approaches find their predictors by determining optimal hyperparameters (for example the bandwidth of a kernel or the regularization parameter $\lambda$ as  mentioned above). These parameters determine the complexity of the predictor. By learning the hyperparameters on the whole input space, the optimization algorithm has in some sense to average out the specifics of the local areas. There are at least two possible ways to overcome this disadvantage by using localized approaches. The first one which is examined by \fullciteA{hable2013universal} from a statistical point of view learns the predictor on a point $x\in\X$ as follows: Determine an area around this point (for example a ball, see \fullciteA{zakai2009consistency}; or by a $k$-nearest neighbor approach) and learn the predictor by using the training data within  this area. The second idea is to divide the input space into several, possibly overlapping regions and to learn local predictors on these regions. To get a prediction for a particular $x\in\X$, use the corresponding predictor(s) of the region(s). The goal of this paper is to examine the statistical properties of the latter approach. Note that there are other ways to deal with an high amount of data, see for example \fullciteA{zhangduchi} or \fullciteA{zhouguolin}.

The fact that it is necessarily possible to localize a consistent learning method (because it behaves in a local manner anyway) has already been shown by \fullciteA{zakai2009consistency}. Furthermore,  there is related work to the considered approach investigating optimal learning rates (and therefore also consistency) regarding to partitions like Voronoi partitions (see \fullciteA{Aurenhammer1991}), the least squares or the hinge loss function and the Gaussian kernel under some assumptions on the underlying distribution $P$ and the Bayes decision function, see \fullciteA{eberts2014adaptive}, \fullciteA{meister2016optimal} and \fullciteA{thomann2016spatial}. In contrast, this paper allows more general regionalization methods (with possibly overlapping regions), kernels and loss functions but does not provide any learning rates. However, there is almost no assumption on $P$ in the paper at hand.

As usual in statistical learning theory, the given data $\calDN$ is divided by chance into some subsamples. First of all, we need a subsample to train the regionalization method. We denote this subsample by \nomenclature[O]{$\mathcal D^{(R)}$}{Training sample for the regionalization method consisting of $0 < r < N$ observations}$\mathcal D^{(R)}$. Recall that (sub)samples in this work allow that the same data point may appear more than once in a (sub)sample. We define \nomenclature[C]{$r$}{Number of observations in the training sample for the regionalization method, $r=\vert \mathcal D^{(R)} \vert $}$r := |\mathcal D^{(R)}|$.  For all our considerations we presume that $0 < r < N$. We can write: $\mathcal D^{(R)} \in (\X\times\Y)^{r}.$ The independent rest of the data is denoted by $\calDn$ where $n$ is given by $r+n=N$. $\mathcal D_n$ will be used to learn the SVM in the step after the regionalization.

We recall that 
subsets of separable metric spaces are separable metric spaces, see \fullciteA[I.6.4, p.~19, and  I.6.12, p.~21]{DunfordSchwartz1958}.

For the aim of localizing the input space $\X$, we need an appropriate regionalization method $R_{\X,r} : (\X \times \Y)^{r} \to \mathcal P(\X)$ where \nomenclature[S]{$\mathcal P(E)$}{Set of subsets of a set $E$}$\mathcal P(\X)$ is the power set of $\X$. The regionalization method need not to get specified, but there are some properties we require:

\begin{itemize}

\item[\textbf{(R1)}] For every fixed (e.\,g. chosen by the user) $r\in \N$ the regionalization method $R_{\X,r}$ divides the input space $\X$ into possibly overlapping regions, i.\,e. $$R_{\X,r}(\mathcal D^{(R)}) = (\X_{(r,1)}, \ldots, \X_{(r,B_{r})}) = (\X_{(r,b)})_{b=1,\ldots,B_{r}}$$  with $\X = \bigcup\limits_{b=1}^{B_{r}}\X_{(r,b)}.$ $B_{r}$ is the number of regions, usually chosen by the regionalization method and depends therefore on the subsample  $\mathcal D^{(R)}$. Note that $B :=B_{r}$ is constant after the regionalization.\\

\item[\textbf{(R2)}] For every fixed and given $r\in \N$ and for every $b\in\left\{1,\ldots,B_{r}\right\}$ ${\X_{(r,b)}}$ is a metric space and, in addition, a complete measurable space, i.\,e. for all probability measures is $(\X_{(r,b)}, \mathfrak B_{{\X_{(r,b)}}})$ complete (see for example \fullciteA[Definition~1.3.7, p.~17]{ash2000probability}).\\

\item[\textbf{(R3)}] For $r\to\infty$, the regionalization method ensures $\left|\X_{(r,b)}\right |\to \infty$  for all $b\in\left\{1,\ldots,B_{r}\right\}$, i.\,e. $$\lim_{\substack{n\to\infty}}\  \min_{b\in\left\{1,\ldots,B_r\right\}}\left|\X_{(r,b)}\right |= \infty.$$

\end{itemize}

Hence, after the regionalization, the number of the regions and the regions themselves are fixed. So we can simplify our notation and look at $$
\X = \bigcup\limits_{b=1}^{B} \Xb.$$

In this and in the following sections, we will use the  notation $$\XI := \left(\ \bigcap\limits_{b\in I} \Xb\  \right)\ohne\left(\ \bigcup\limits_{b\not\in I} \Xb\ \right), \ \ \ \ I\subset \btoB.$$  By $\|\cdot\|_{\Xb\text{-}\infty}$ we will denote the supremum norm on $\Xb$, i.\,e. for a function $f:\X\to\R$ we define $$\|f\|_{\Xb\text{-}\infty} := \sup\left\{f(x)| x\in\Xb\right\}$$ and for a kernel $k:\X\times\X\to\R$ we define
$$\|k\|_{\Xb\text{-}\infty} := \sup_{\substack{x\in\Xb}} \ \sqrt{k(x,x)}.$$

Assume that the whole input space $\X$ will be divided by a regionalization method in some regions $\X_1,\ldots,\X_B$ which need not to be disjoint. Then we will learn the SVMs separately, one SVM for each region. After that we combine these local SVMs to a composed estimator or classificator, respectively, delivering reasonable values for all $x\in\X$. The influence of the local predictors is pointwise controlled by measurable weight functions $w_b : \X \to [0,1], b\in\btoB$, which fulfill for all $x\in\X$ the following conditions:
\begin{itemize}
\item [\textbf{(W1)}] $\sum\limits_{b = 1}^B w_b(x)  = 1$ for all $x\in\X,$
\item [\textbf{(W2)}] $w_b(x) = 0$ for all $x\notin\Xb$  and for all $b\in\btoB.$
\end{itemize}

Therefore, our composed predictors are defined as

\nomenclature[F]{$\fLPlambdacomp$}{Theoretical composed predictor in the overlapping case}
\nomenclature[F]{$\fLDnlambdacomp$}{Empirical composed predictor in the overlapping case}

\begin{align}
\fLPlambdacomp : \X\to\R, \ \ \ \ \  \fLPlambdacomp(x) :=\zusammentheoretischx,     \notag
\end{align}
\begin{align}
\fLDnlambdacomp : \X\to\R, \ \ \ \ \  \fLDnlambdacomp(x) := \zusammenempirischx,     \notag
\end{align}

where

\begin{itemize}

\item $P$ is the unknown distribution of $(X,Y)$ on $\X\times\Y$ and $\frakDn := \frac 1 n \insumme \delta_{(x_i, y_i)}$ is the empirical measure based on a sample or data set $\calDn := \left(  (x_1, y_1), \ldots, (x_n, y_n) \right)$ of $n$ i.i.d. realizations of $(X,Y)$.
\item $\Pb$ is the theoretical distribution on $\Xb\times \Y$, $\frakDnb$ its empirical analogon.  They are in fact probability distributions in all interesting situations, i.\,e. if $ P(\Xb\times \Y) > 0$ or $\frakDn(\Xb\times \Y) > 0$, respectively, because they are built from $P$ and $\frakDn$ as follows:

$$ \Pb := \begin{cases}
 \ \  \frac 1 {P(\Xb\times \Y)} \ P_{|_{\Xb\times \Y}} \ \   &, \  \text{if} \ \  P(\Xb\times \Y) > 0         \\
  \ \ \ \ \ \ \ \ 0                                                                 &, \  \text{otherwise}
\end{cases}$$
and
$$\nomenclature[D]{$\frakDnb$}{Empirical distribution on $\Xb\times\Y$} \frakDnb := \begin{cases}
 \ \ \frac 1 {\frakDn(\Xb\times \Y)}\  \frakDn_{|_{\Xb\times \Y}}  \ \   &, \  \text{if} \ \  \frakDn(\Xb\times \Y) > 0         \\
   \ \ \ \ \ \ \ \ 0                                                                                    &, \  \text{otherwise}
\end{cases}.$$
The zero denotes in this definitions the null measure on $\BorelXnbY$. If $\Xb\times\Y$ is a null set with respect to $P$ or $\frakDn$, respectively, we are not interested in this region. This motivates the decision for the 0. Otherwise we want to deal with a probability measure which motivates the weights ${\frakDn(\Xb\times \Y)}^{-1}$ and ${P(\Xb\times \Y)}^{-1}$.  Of course, this definition is a bit technical but very practical.

Note that $ P_{|_{\Xb\times \Y}}:=  \Ind_{\Xb\times\Y} P$ and $ \frakDn_{|_{\Xb\times \Y}}:= \Ind_{\Xb\times\Y}\frakDn$ are the restrictions of $P$ and $\frakDn$ on (the Borel-$\sigma$-algebra on) $\Xb\times \Y$. 

The collection of all data points in $\Xb\times \Y$ will be denoted by $\calDnb$ which is of course a subtuple of $\calDn$. With this notation we can write $\frakDn(\Xb\times \Y) =|\calDnb| =:  n_b$. \nomenclature[C]{$n_b$}{Number of observations in $\Xb\times\Y$, $n_b = \vert \calDnb \vert$}
\item In an analogous way,  the regional marginal distribution of $X$ is defined by $\PbX := P^X(\Xb)^{-1} P^X_{|\Xb}$ if $P^X(\Xb)>0$ and 0 else.
\item $\lambda := (\lambda_1, \ldots, \lambda_B) \subset\  ]0, \infty[^{B}.$ Later on, we will also consider $$(\lambda_n)_{n\in\N} := (\lambda_{(n_1,1)},\ldots, \lambda_{(n_B,B)})_{n\in\N}, \ \ \ \ \ n = \bsumme n_b,$$ instead of a fixed $\lambda$.
\item  By $\lokaltheoretisch$ we denote the theoretical local SVM learnt on $\Xb\times\Y$ with respect to $\Ls$ and $\Pb$, if $\Pb$ is a probability measure; if $\Pb$ is the null measure, $\lokaltheoretisch$ is an arbitrary measurable function. By $\lokalempirisch$ we denote the empirical local SVM learnt on $\Xb\times\Y$ with respect to $\Ls$ and $\frakDnb$, if $\frakDnb$ is a probability measure; if $\frakDnb$ is the null measure, $\lokalempirisch$ is an arbitrary measurable function.

\end{itemize}

Note that in case of overlapping regions these composed predictors need not to be elements of any Hilbert space and, therefore, the expression $\left\|\fLPlambdacomp\right\|_H$ does not make any sense.

\section{Results}\label{results}

\subsection{$\Ls$-risk consistency}

This section contains the main results of the paper: consistency and robustness properties of the composed predictors that have been learnt locally. The results hold for all distributions, i.\,e. also for heavy tailed distributions like Cauchy or stable distributions.

\begin{thm}\label{consistencytheorem}

Let $L$ be a convex, Lipschitz-continuous (with Lipschitz-constant $|L|_1\neq 0$) loss function and $\Ls$ its shifted version.
For all $b\in\btoB$ let $k_b$ be a measurable and bounded kernel on $\X$ and let the corresponding RKHSs $H_b$ be separable. Let the regionalization method fulfill \textbf{(R1), (R2),} and \textbf{(R3)}.

Then for \textbf{all} distributions $P$ on $\X\times\Y$ with $H_b$ dense in $L^1(\PbX), b\in\btoB,$ and every collection of sequences $\lambda_{(n_1,1)},\ldots,\lambda_{(n_B,B)}$ with $\lambdanb\to 0$  and $\lambda_{(n_b,b)}^{2}n_b \to\infty$ when $n_b\to\infty$, $b\in\btoB,$ it holds true that
$$\mathcal R_{\X,\Ls,P}(\fLDnlambdancomp) \nto \mathcal R_{\X,\Ls,P}^* \ \ \ \text{in probability with respect to $P$}.$$
\end{thm}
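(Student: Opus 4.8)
The plan is to reduce the statement to the already-established universal consistency of a \emph{single} support vector machine on each region, letting the weights together with the convexity of $\Ls$ absorb the overlap of the regions. Put $p_b:=P(\Xb\times\Y)=\PX(\Xb)$. First I would note that the composed predictor causes no integrability trouble: each $\lokalempirischn$ is bounded by Proposition~\ref{dieungleichungschlechthin} (with the asymptotically negligible case $n_b=0$ handled by fixing, e.g., the zero function), hence so is $\fLDnlambdancomp=\bsumme w_b\,\lokalempirischn$, a pointwise convex combination; thus $\fLDnlambdancomp\in L^1(\PX)$ and $\mathcal R_{\X,\Ls,P}(\fLDnlambdancomp)$ is finite by Proposition~\ref{endlichkeiundlipschitzt}(c), and the regions with $p_b=0$ are irrelevant because $\bigcup_{b:\,p_b=0}\Xb$ is $\PX$-null. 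The heart of the proof is then the deterministic inequality, valid for every realization of $\calDn$,
\[
0\ \le\ \mathcal R_{\X,\Ls,P}(\fLDnlambdancomp)-\mathcal R_{\X,\Ls,P}^{*}\ \le\ \bsumme p_b\,\Big(\mathcal R_{\Xb,\Ls,\Pb}(\lokalempirischn)-\mathcal R_{\Xb,\Ls,\Pb}^{*}\Big),
\]
with the $b$-th summand read as $0$ when $p_b=0$; once this is in place it suffices to show that each summand on the right converges to $0$ in probability.

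To prove the inequality, the left-hand side is $\ge 0$ simply because $\fLDnlambdancomp$ is a measurable map $\X\to\R$. For the right-hand side, introduce the inner risks $\mathcal C_{\Ls,P,x}(t):=\int_\Y\Ls(y,t)\,dP(y\,|\,x)$ and $\mathcal C_{\Ls,P,x}^{*}:=\inf_{t\in\R}\mathcal C_{\Ls,P,x}(t)$. By convexity of $t\mapsto\Ls(y,t)$ (Proposition~\ref{endlichkeiundlipschitzt}(a)) and \textbf{(W1)} (so that $(w_b(x))_b$ is a probability vector), Jensen's inequality gives $\Ls(y,\fLDnlambdancomp(x))\le\bsumme w_b(x)\Ls(y,\lokalempirischn(x))$; integrating against $P$, using \textbf{(W2)} ($w_b\equiv 0$ off $\Xb$) and disintegrating $P$ into $\PX$ and $P(\cdot\,|\,x)$, one obtains $\mathcal R_{\X,\Ls,P}(\fLDnlambdancomp)\le\bsumme\int_{\Xb}w_b(x)\,\mathcal C_{\Ls,P,x}(\lokalempirischn(x))\,d\PX(x)$. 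Now split $\mathcal C_{\Ls,P,x}(\lokalempirischn(x))=\big(\mathcal C_{\Ls,P,x}(\lokalempirischn(x))-\mathcal C_{\Ls,P,x}^{*}\big)+\mathcal C_{\Ls,P,x}^{*}$: by \textbf{(W1)} the $\mathcal C^{*}$-part sums to $\int_\X\mathcal C_{\Ls,P,x}^{*}\,d\PX(x)\le\mathcal R_{\X,\Ls,P}^{*}$ (the trivial ``$\inf$ under the integral'' bound, which needs no measurable selection on $\X$ --- important, since $\X$ itself is not assumed complete), while in the remaining part $0\le w_b\le 1$ and the integrand is $\ge 0$, so it is $\le\bsumme\int_{\Xb}\big(\mathcal C_{\Ls,P,x}(\lokalempirischn(x))-\mathcal C_{\Ls,P,x}^{*}\big)\,d\PX(x)$. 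Finally, for $p_b>0$ the conditional distribution satisfies $\Pb(\cdot\,|\,x)=P(\cdot\,|\,x)$ for $\PX$-a.e.\ $x\in\Xb$, whence $\mathcal C_{\Ls,\Pb,x}^{*}=\mathcal C_{\Ls,P,x}^{*}$ a.e.\ on $\Xb$; since $(\Xb,\BorelXb)$ is a complete measurable space by \textbf{(R2)} the identity $\mathcal R_{\Xb,\Ls,\Pb}^{*}=\int_{\Xb}\mathcal C_{\Ls,\Pb,x}^{*}\,d\PbX(x)$ is available (standard minimal-inner-risk result, \citeA{steinwart2008support}), and $d\PX|_{\Xb}=p_b\,d\PbX$; assembling the pieces gives the displayed estimate.

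It then remains to prove that $\mathcal R_{\Xb,\Ls,\Pb}(\lokalempirischn)-\mathcal R_{\Xb,\Ls,\Pb}^{*}$ converges to $0$ in probability for every $b$ with $p_b>0$. Conditionally on $\{n_b=m\}$, the $m$ data points lying in $\Xb\times\Y$ are, by exchangeability, $m$ i.i.d.\ $\Pb$-observations, so $\lokalempirischn$ is then distributed as the empirical SVM on $\Xb\times\Y$ learnt from $m$ i.i.d.\ $\Pb$-observations with regularization $\lambda_{(m,b)}$. Since $k_b$ is measurable and bounded, $H_b$ is separable and dense in $L^1(\PbX)$, $(\Xb,\BorelXb)$ is complete, and the deterministic schedule satisfies $\lambda_{(m,b)}\to 0$ and $m\,\lambda_{(m,b)}^{2}\to\infty$, the universal consistency theorem for a single SVM (\citeA[Theorem~8]{christmann2009consistency}) yields, for each $\varepsilon>0$, $a_m(\varepsilon):=\Wk\big(\mathcal R_{\Xb,\Ls,\Pb}(f)-\mathcal R_{\Xb,\Ls,\Pb}^{*}>\varepsilon\big)\to 0$ as $m\to\infty$, where $f$ denotes that SVM. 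By the strong law of large numbers $n_b/n\to p_b>0$ almost surely, hence $n_b\to\infty$ in probability; combined with $0\le a_m\le 1$ and $\Wk\big(\mathcal R_{\Xb,\Ls,\Pb}(\lokalempirischn)-\mathcal R_{\Xb,\Ls,\Pb}^{*}>\varepsilon\big)=\E\big[a_{n_b}(\varepsilon)\big]$, this forces $\E[a_{n_b}(\varepsilon)]\to 0$ (and incidentally confirms that every $p_b>0$ region is nondegenerate for large $n$). As $B$ is finite and fixed after the regionalization, the right-hand side of the core estimate --- a finite sum of nonnegative terms, each converging to $0$ in probability, with coefficients $p_b\le 1$ --- converges to $0$ in probability, which together with the lower bound gives $\mathcal R_{\X,\Ls,P}(\fLDnlambdancomp)\nto\mathcal R_{\X,\Ls,P}^{*}$ in probability with respect to $P$.

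The step I expect to be the real obstacle is this last transfer: one must push the \emph{deterministic} regularization schedule $m\mapsto\lambda_{(m,b)}$ through the \emph{random} regional sample size $n_b$, which is precisely where \textbf{(R1)} and \textbf{(R3)} --- via the law of large numbers forcing $n_b\to\infty$ --- enter. A secondary point needing care is that $\X$ is only a separable metric space, so minimal-inner-risk identities may be invoked only on the complete regions $\Xb$ (through \textbf{(R2)}); globally one must make do with the one-sided bound $\int_\X\mathcal C_{\Ls,P,x}^{*}\,d\PX\le\mathcal R_{\X,\Ls,P}^{*}$, which is exactly the direction the convexity estimate needs. Everything else --- finiteness of the composed risk via Proposition~\ref{dieungleichungschlechthin}, the a.e.\ identification of conditional distributions on the slabs $\Xb\times\Y$, and the elementary fact that $\E[a_{N_n}]\to 0$ whenever $a_m\to 0$, $0\le a_m\le 1$ and $N_n\to\infty$ in probability --- is routine.
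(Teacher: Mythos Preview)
Your argument is correct and follows a genuinely different route from the paper's. The paper splits
\[
\bigl|\mathcal R_{\X,\Ls,P}(\fLDnlambdancomp)-\mathcal R_{\X,\Ls,P}^{*}\bigr|
\]
into three pieces --- empirical versus theoretical composed predictor, theoretical composed predictor versus an approximating $f^a$, and $f^a$ versus the Bayes risk --- and treats each from scratch: the first via Lipschitz continuity, inequality~(\ref{dieungleichung}) and the bound $\|\lokalempirischn-\lokaltheoretischn\|_{H_b}\le\lambdanb^{-1}\|\E_{\Pb}h_{(n,b)}\Phi_b-\E_{\frakDnb}h_{(n,b)}\Phi_b\|_{H_b}$ from \citeA[Theorem~7]{christmann2009consistency} together with Hoeffding's inequality in $H_b$; the second via convexity of $L$, a decomposition over the disjoint pieces $\XI$, and the approximation error function $\lambda\mapsto A_{b,f}(\lambda)$ combined with Lemma~\ref{bayesinhilbert}. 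In effect the paper re-derives single-region consistency from its ingredients. You instead prove one deterministic master inequality,
\[
0\ \le\ \mathcal R_{\X,\Ls,P}(\fLDnlambdancomp)-\mathcal R_{\X,\Ls,P}^{*}\ \le\ \bsumme p_b\Bigl(\mathcal R_{\Xb,\Ls,\Pb}(\lokalempirischn)-\mathcal R_{\Xb,\Ls,\Pb}^{*}\Bigr),
\]
via Jensen's inequality and inner risks, and then invoke the ready-made universal consistency result \citeA[Theorem~8]{christmann2009consistency} on each region as a black box, conditioning on $n_b$ to handle the random regional sample size. Your route is more modular and is more explicit than the paper about the conditional i.i.d.\ structure of the regional subsample and about pushing the deterministic schedule $m\mapsto\lambda_{(m,b)}$ through the random $n_b$ (which the paper dispatches with a one-line appeal to \textbf{(R3)}). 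The trade-off is that you need the minimal-inner-risk identity $\mathcal R_{\Xb,\Ls,\Pb}^{*}=\int_{\Xb}\mathcal C_{\Ls,\Pb,x}^{*}\,d\PbX$ on each region, hence a genuine use of the completeness assumption \textbf{(R2)}; the paper's argument stays at the level of risks and only needs denseness of $H_b$ in $L^1(\PbX)$ (via Lemma~\ref{bayesinhilbert}), so it leans on \textbf{(R2)} less essentially.
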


\begin{bem}
\begin{enumerate}[label={(\alph*)}]
\item Note that the assumption $|L|_1\neq 0$ is purely technical when we have a look at the use of SVMs. If $|L|_1 = 0$, the loss function $L$ would be constant and therefore not be interesting for statistical learning from an applied point of view.
\item The denseness assumption is not very strict. For example, the Gaussian-RBF-kernel fulfills the assumption (see \fullciteA[Theorem 4.63, p.~158]{steinwart2008support}).
\item By using the Lipschitz-continuity of $\Ls$ and under the given assumption that $H_b$ dense in $L^1(\PbX), b\in\btoB,$ it holds for all $n\in\N$ that
\begin{align}\notag
& \left| \mathcal R_{\X,\Ls,P}\left(\fLDnlambdancomp(x)\right)\right|\notag = \left|\  \int\limits_{\X\times\Y} \Ls\left(y,\fLDnlambdancomp(x)\right) \ dP(x,y)\ \right| \notag \\ 
& = \left|\  \int\limits_{\X\times\Y} \Ls\left(y,\zusammenempirischnx\right) \ dP(x,y)\ \right|\notag \\
& = \left|\  \int\limits_{\X\times\Y} L\left(y,\zusammenempirischnx\right) - L(y,0) \ dP(x,y)\ \right|\notag \\
& \le |L|_1 \  \int\limits_{\X\times\Y} \left|\zusammenempirischnx\right| \ dP(x,y)\ \notag \\
& \le |L|_1 \  \Isumme\ \int\limits_{\XI\times\Y} \left|\zusammenempirischnx\right| \ dP(x,y)\ \notag \\
& \le |L|_1 \  \Isumme\ \bsumme\  \int\limits_{\XI\times\Y} \left|\lokalempirischn(x)\right| \ dP(x,y)\ \notag \\
& \le |L|_1 \  \Isumme\ \bsumme\  \int\limits_{\Xb\times\Y} \left|\lokalempirischn(x)\right| \ dP(x,y)\ \notag \\
& \le |L|_1 \  \Isumme\ \bsumme\  P(\Xb\times\Y) \int\limits_{\Xb} \left|\lokalempirischn(x)\right| \ d\PbX(x)\ < \infty. \notag 
\end{align}\notag
Therefore, problems concerning infinite risks cannot arise in our situation. The estimates remain true when we consider the theoretical SVMs, i.\,e. when we replace $D_n$ by $P$.
\item The assumption that the RKHS $H$ is separable is not difficult to fulfill. The usage of a continuous kernel guarantees the separability of $H$, see  \fullciteA[Lemma 4.33, p.~130]{steinwart2008support}.
\end{enumerate}
\end{bem}

As shown above, consistency holds for arbitrary weights 
$w_b : \X \to [0,1], b\in\btoB$, fulfilling $\sum\limits_{b = 1}^B w_b(x)  = 1$ 
for all $x\in\X$, and $w_b(x) = 0$ for all $x\not\in \Xb$.  The user can decide which weights he wants to apply. We propose two simple weight functions.

Firstly, the  weight
\begin{align}
w_b(x) = \frac {\Ind_{\X_b}(x)} { \sum\limits_{\beta = 1}^B \Ind_ {\X_\beta}(x)}, \ \ \ b\in\btoB,\notag
\end{align}
which guarantees that every local SVM comes only in its own region into operation, in intersections we then use the average of the relevant local SVMs. If a weighted average is needed,  the weights can be defined as 
\begin{align}
w_b(x) =  { \sum\limits_{b = 1}^B\  \theta_b\ \Ind_ {\Xb}(x)}, \ \ \ b\in\btoB,\notag
\end{align}
 for suitable $\theta_b$, i.\,e. $\theta_b\in[0,1]$ such that  $\bsumme w_b(x) = 1$ for all $x\in\X$.
  

\subsection{Robustness}

In addition to consistency, we can show a robustness property. Let $\mathcal M_1(M)$ denote the set of probability measures on the Borel-$\sigma$-algebra of a set $M$. For all $b\in\btoB$ and $\varepsilon_b\in [0,\frac 1 2[$ let again $\Pb := P(\Xb\times\Y)^{-1}P_{|\Xb\times\Y}$ if $P(\Xb\times\Y)\neq 0$ and the null measure otherwise. Define for a distribution $P$ on $\X\times\Y$ the $\varepsilon_b$-contamination environment of $P$ (or $\Pb$) on $\Xb\times\Y$
$$N_{b,\varepsilon_b}(P) := N_{b,\varepsilon_b}(\Pb) := \left\{(1-\varepsilon_b)\Pb + \varepsilon_b\tilde\Pb\ \left|\ \tilde\Pb\in\mathcal M_1(\Xb\times\Y) \right.\right\}.$$
Furthermore, let $\varepsilon := (\varepsilon_1,\ldots,\varepsilon_B)\in [0,\frac 1 2[^B$. By using these notations, we can define
$$N_\varepsilon(P) := \left\{\tilde P\in \mathcal M_1(\X\times\Y) \ \left|\ \tilde\Pb\in N_{b,\varepsilon_b}\ \text{ for all } b\in\btoB \right.\right\}$$
as a composed $\varepsilon$-contamination environment of $P$ on $\X\times\Y$. 

It is then possible to give an upper bound for the maxbias, i.\,e. an upper bound for the possible bias of the composed predictor based on locally learnt SVMs. Recall that we denote the supremum norm on $\Xb$ by $\|\cdot\|_{\Xb\text{-}\infty}$, i.\,e. for a function $f:\X\to\R$ we define $$\|f\|_{\Xb\text{-}\infty} := \sup\left\{f(x)| x\in\Xb\right\}$$
and for a kernel $k:\X\times\X\to\R$ we define
$$\|k\|_{\Xb\text{-}\infty} := \sup_{\substack{x\in\Xb}} \ \sqrt{k(x,x)}.$$

\begin{thm}\label{robustnesstheorem}
Let $L$ be a convex, Lipschitz-continuous (with Lipschitz-constant $|L|_1\neq 0$) loss function and $\Ls$ its shifted version.
For all $b\in\btoB$ let $k_b$ be a measurable and bounded kernel on $\X$ and let the corresponding RKHSs $H_b$ be separable. Let the regionalization method fulfill \textbf{(R1), (R2)} and \textbf{(R3)}. Then, for all distributions $P$ on $\X\times\Y$ and all $\lambda := (\lambda_1,\ldots,\lambda_B)\in [0,\infty[^B$, it holds that
$$\underset{\tilde P \in N_{\varepsilon}(P)}{\sup} \ \left\|\fLPschlangelambdacomp - \fLPlambdacomp\right\|_{\infty}\le 2\ |L|_1 \ \bsumme\ \|w_b\|_{\Xb\text{-}\infty}\   \frac {\varepsilon_b}{\lambda_b}\  \|k_b\|^2_{\Xb\text{-}\infty}.$$
\end{thm}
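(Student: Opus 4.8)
The plan is to bound the composed bias region by region, reducing everything to an $H_b$-stability estimate for a single local support vector machine. By definition $\fLPschlangelambdacomp-\fLPlambdacomp=\bsumme w_b\,(\lokaltheoretischschlange-\lokaltheoretisch)$, where $\lokaltheoretischschlange$ is the local SVM on $\Xb\times\Y$ built from the regional distribution $\tilde\Pb$ of $\tilde P$. Fix $x\in\X$: by \textbf{(W2)} the $b$-th summand vanishes unless $x\in\Xb$, and there $0\le w_b(x)\le\|w_b\|_{\Xb\text{-}\infty}$; moreover $\lokaltheoretischschlange-\lokaltheoretisch\in H_b$, so the localized form of Proposition~\ref{dieungleichungschlechthin} — obtained by taking $\sup_{x\in\Xb}$ in the reproducing-kernel estimate $|g(x)|=|\langle g,k_b(\cdot,x)\rangle_{H_b}|\le\|g\|_{H_b}\sqrt{k_b(x,x)}$ — yields $\sup_{x\in\Xb}|\lokaltheoretischschlange(x)-\lokaltheoretisch(x)|\le\|\lokaltheoretischschlange-\lokaltheoretisch\|_{H_b}\,\|k_b\|_{\Xb\text{-}\infty}$. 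Taking absolute values inside the sum and then the supremum over $x\in\X$ gives
$$\left\|\fLPschlangelambdacomp-\fLPlambdacomp\right\|_{\infty}\le\bsumme\|w_b\|_{\Xb\text{-}\infty}\,\|k_b\|_{\Xb\text{-}\infty}\,\left\|\lokaltheoretischschlange-\lokaltheoretisch\right\|_{H_b},$$
so it suffices to show, for each $b$ with $\lambda_b>0$ and uniformly over $\tilde\Pb\in N_{b,\varepsilon_b}(\Pb)$, that $\|\lokaltheoretischschlange-\lokaltheoretisch\|_{H_b}\le 2|L|_1\|k_b\|_{\Xb\text{-}\infty}\,\varepsilon_b/\lambda_b$. (If $\lambda_b=0$ the $b$-th term on the right is $+\infty$ unless $\varepsilon_b=0$, in which case $\tilde\Pb=\Pb$ and the difference vanishes; if $P(\Xb\times\Y)=0$ then either $N_\varepsilon(P)=\emptyset$ or again $\tilde\Pb=\Pb$ for every admissible $\tilde P$, so those cases are trivial.)

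For the stability estimate, write $\tilde\Pb=(1-\varepsilon_b)\Pb+\varepsilon_b Q_b$ with $Q_b\in\mathcal M_1(\Xb\times\Y)$ and put $G_\mu(f):=\mathcal R_{\Xb,\Ls,\mu}(f)+\lambda_b\|f\|_{H_b}^2$ for $\mu\in\{\Pb,\tilde\Pb\}$; then $\lokaltheoretisch$ and $\lokaltheoretischschlange$ are the respective minimizers of $G_{\Pb}$ and $G_{\tilde\Pb}$ over $H_b$, which exist and are unique by the existence results quoted above since $k_b$ is bounded and measurable (so $H_b$ consists of bounded functions and lies in $L^1$ of the relevant $X$-marginal) and $\Ls$ is convex and Lipschitz. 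Two structural facts drive the argument: (i) for \emph{every} probability measure $\nu$ on $\Xb\times\Y$, $f\mapsto\mathcal R_{\Xb,\Ls,\nu}(f)$ is real-valued (finite because $\Ls(\cdot,0)\equiv 0$ and the elements of $H_b$ are bounded), convex (Proposition~\ref{endlichkeiundlipschitzt}(a)) and Lipschitz on $H_b$ with constant $|L|_1\|k_b\|_{\Xb\text{-}\infty}$ (by $|\Ls|_1=|L|_1$, Proposition~\ref{endlichkeiundlipschitzt}(b), and the localized Proposition~\ref{dieungleichungschlechthin}); (ii) $f\mapsto\lambda_b\|f\|_{H_b}^2$ is $2\lambda_b$-strongly convex, hence so are $G_{\Pb}$ and $G_{\tilde\Pb}$, so that $G_\mu(g)\ge G_\mu(f^\star)+\lambda_b\|g-f^\star\|_{H_b}^2$ whenever $f^\star$ minimizes $G_\mu$ and $g\in H_b$. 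Applying (ii) with $\mu=\Pb$, $f^\star=\lokaltheoretisch$, $g=\lokaltheoretischschlange$, combining it with the optimality $G_{\tilde\Pb}(\lokaltheoretischschlange)\le G_{\tilde\Pb}(\lokaltheoretisch)$, and inserting $\mathcal R_{\Xb,\Ls,\tilde\Pb}=(1-\varepsilon_b)\mathcal R_{\Xb,\Ls,\Pb}+\varepsilon_b\mathcal R_{\Xb,\Ls,Q_b}$, the $\lambda_b\|\cdot\|_{H_b}^2$-terms cancel and one obtains
$$\lambda_b\left\|\lokaltheoretischschlange-\lokaltheoretisch\right\|_{H_b}^2\le\varepsilon_b\Big[\big(\mathcal R_{\Xb,\Ls,\Pb}(\lokaltheoretischschlange)-\mathcal R_{\Xb,\Ls,\Pb}(\lokaltheoretisch)\big)-\big(\mathcal R_{\Xb,\Ls,Q_b}(\lokaltheoretischschlange)-\mathcal R_{\Xb,\Ls,Q_b}(\lokaltheoretisch)\big)\Big].$$
Bounding each of the two brackets by $|L|_1\|k_b\|_{\Xb\text{-}\infty}\|\lokaltheoretischschlange-\lokaltheoretisch\|_{H_b}$ via (i), and then dividing by $\|\lokaltheoretischschlange-\lokaltheoretisch\|_{H_b}$ (trivial if it vanishes), gives exactly $\|\lokaltheoretischschlange-\lokaltheoretisch\|_{H_b}\le 2|L|_1\|k_b\|_{\Xb\text{-}\infty}\,\varepsilon_b/\lambda_b$, with no dependence on $Q_b$.

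Substituting this into the first display yields the asserted bound uniformly over $\tilde P\in N_\varepsilon(P)$, and passing to the supremum over $\tilde P$ finishes the proof. The only genuinely substantive ingredient is the stability estimate — the $H_b$-Lipschitz dependence of the regularized minimizer on the underlying measure under $\varepsilon_b$-contamination, the local counterpart of the classical robustness bound for global SVMs; the restriction $\varepsilon_b<\tfrac12$ merely keeps $N_{b,\varepsilon_b}(\Pb)$ a meaningful neighbourhood and is not used in the estimate itself. Its engine is the coupling of the $2\lambda_b$-strong convexity of $G_\mu$ with the \emph{finite-valued} Lipschitz continuity of $\mathcal R_{\Xb,\Ls,\cdot}$ on $H_b$, and this is exactly why the whole construction is carried out with the shifted loss $\Ls$ rather than $L$: it is what keeps every occurring risk finite, and hence the argument valid, for \emph{arbitrary} $P$, heavy tails included. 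Everything else is bookkeeping: the localization of Proposition~\ref{dieungleichungschlechthin} to $\Xb$, the weight conditions \textbf{(W1)}/\textbf{(W2)}, and the degenerate cases $\lambda_b=0$, $\varepsilon_b=0$ and $P(\Xb\times\Y)=0$.
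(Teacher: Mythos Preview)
Your proof is correct and the outer decomposition---pulling the sup-norm difference apart into local summands via \textbf{(W2)}, bounding $w_b$ by $\|w_b\|_{\Xb\text{-}\infty}$, and passing from $\|\cdot\|_{\Xb\text{-}\infty}$ to $\|\cdot\|_{H_b}$ through the localized version of Proposition~\ref{dieungleichungschlechthin}---is identical to the paper's. The genuine difference lies in how the local stability estimate $\|\lokaltheoretischschlange-\lokaltheoretisch\|_{H_b}\le 2|L|_1\|k_b\|_{\Xb\text{-}\infty}\varepsilon_b/\lambda_b$ is obtained. The paper does not prove it but quotes \fullciteA[Theorem~12]{christmann2009consistency}, whose argument goes through a Bochner-integral representation of the SVM via the subdifferential of $\Ls$ (the function $h$ with $\|h\|_\infty\le|L|_1$) and then controls the difference by the total-variation distance $\|\tilde\Pb-\Pb\|_{\Xb\text{-TV}}\le 2\varepsilon_b$. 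You instead give a self-contained argument: the $2\lambda_b$-strong convexity of the regularized objective $G_\mu$ combined with the optimality of $\lokaltheoretischschlange$ for $G_{\tilde\Pb}$ yields a quadratic lower bound that, after the $\lambda_b\|\cdot\|_{H_b}^2$ terms cancel, is controlled by the $H_b$-Lipschitz constant $|L|_1\|k_b\|_{\Xb\text{-}\infty}$ of the two risk functionals $\mathcal R_{\Xb,\Ls,\Pb}$ and $\mathcal R_{\Xb,\Ls,Q_b}$. Your route is more elementary---it avoids Bochner integrals and subdifferentials entirely and uses only convexity and Lipschitz continuity of $\Ls$---and makes the constant $2$ appear transparently as the sum of two Lipschitz bounds rather than as the TV norm of a difference of probability measures. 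The paper's route, on the other hand, connects the result to the broader robustness theory (influence functions, operator-norm stability) developed in the cited reference, and would generalize more directly to perturbations measured in norms other than $\varepsilon$-contamination.
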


Note that this bound is a uniform bound in the sense that it is valid for \textbf{all} distributions $P$ and all weighting schemes fulfilling \textbf{(W1)} and \textbf{(W2)}, i.\,e. $\sum\limits_{b = 1}^B w_b(x)  = 1$  for all $x\in\X$ and \ $w_b(x) = 0$ for all $x\notin\Xb$ and for all $b\in\btoB.$

\begin{bsp}
Let $d\in\N$, $\X\subset \R^d$, $k_b$ be a Gaussian-RBF-kernel, i.\,e. $k_b(x,x^\prime) := \exp\left(-\gamma_b^{-2} \|x-x^\prime\|_2^2\right), \gamma_b>0$, for all $b\in\btoB$, and let $L$ be the hinge loss (for classification) or the $\tau$-pinball loss (for quantile regression, $\tau \in (0,1)$). Then Theorem~\ref{robustnesstheorem} provides the upper bound 
$$\underset{\tilde P \in N_{\varepsilon}(P)}{\sup} \ \left\|\fLPschlangelambdacomp - \fLPlambdacomp\right\|_{\infty} \le \ \bsumme \frac{1}{\lambda_b}.$$
\end{bsp}

\section{Summary}\label{summary}

By proving universal consistency and robustness (in the sense of a bounded maxbias) of locally learnt predictors we have shown that learning in a local way conserves desirable properties of kernel based methods like support vector machines. We see that there is no disadvantage of learning separate predictors, one for each region, and combining them from these point of views. These results were shown for all distributions and only under assumptions which are verifiable by the user.

\section*{Acknowledgement}

The author would like to thank Andreas Christmann and Katharina Strohriegl for useful discussions.

\begin{appendix}

\section{Proofs}

It might be useful to recall inequality (\ref{dieungleichung}) for functions $f$ in the reproducing kernel Hilbert space of a bounded kernel $k$: $ ||f||_\infty\le ||f||_H ||k||_\infty.$

For the proof of Theorem~\ref{consistencytheorem} we need the following lemma.

\begin{lem}\label{bayesinhilbert}
To obtain the Bayes risk with respect to a shifted loss function $\Ls$ and a distribution $P$, it is sufficient to optimize over all $f\in \mathcal F$, if $\mathcal F$ is dense in $L^1(P^X)$, i.\,e. 
$$ \mathcal R_{\X,\Ls,P}^* =  \mathcal R_{\X, \Ls,P,\mathcal F}^*.$$
\end{lem}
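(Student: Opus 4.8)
The plan is to establish the two inequalities $\mathcal{R}_{\X,\Ls,P}^* \le \mathcal{R}_{\X,\Ls,P,\mathcal{F}}^*$ and $\mathcal{R}_{\X,\Ls,P,\mathcal{F}}^* \le \mathcal{R}_{\X,\Ls,P}^*$ separately. The first is immediate from the definitions, since $\mathcal{F}$ is a subset of the measurable functions $\X\to\R$, so the infimum defining $\mathcal{R}_{\X,\Ls,P}^*$ runs over a larger set. For the reverse inequality I would proceed in two steps: (i) show that the Bayes risk $\mathcal{R}_{\X,\Ls,P}^*$ can already be obtained by taking the infimum over \emph{bounded} measurable functions, and (ii) approximate a bounded function in $L^1(P^X)$ by an element of $\mathcal{F}$, transferring closeness in $L^1(P^X)$ to closeness of the $\Ls$-risks via the Lipschitz continuity of $\Ls$ (Proposition~\ref{endlichkeiundlipschitzt}(b); recall that $L$, and hence $\Ls$, is convex and Lipschitz continuous in the setting of Theorem~\ref{consistencytheorem}).

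For step (i), fix a measurable $f:\X\to\R$ for which $\int_{\X\times\Y}\Ls(y,f(x))^+\,dP(x,y)<\infty$, so that $\mathcal{R}_{\X,\Ls,P}(f)$ is well-defined in $[-\infty,\infty)$ (functions violating this contribute nothing to the infimum, and $f\equiv 0$ already has risk $0$). Consider the truncations $f_M:=\max\{-M,\min\{M,f\}\}$, which are bounded, measurable, and hence lie in $L^1(P^X)$, and note $\Ls(y,f_M(x))\to\Ls(y,f(x))$ pointwise by continuity of $\Ls(y,\cdot)$. The key device is convexity: since $t\mapsto\Ls(y,t)$ is convex (Proposition~\ref{endlichkeiundlipschitzt}(a)) with $\Ls(y,0)=0$, writing $M=\tfrac{M}{f(x)}f(x)+(1-\tfrac{M}{f(x)})\cdot 0$ for $f(x)>M>0$ gives $\Ls(y,M)\le\tfrac{M}{f(x)}\Ls(y,f(x))\le\Ls(y,f(x))^+$, symmetrically on $\{f<-M\}$, while $\Ls(y,f_M(x))=\Ls(y,f(x))\le\Ls(y,f(x))^+$ on $\{|f|\le M\}$. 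Hence $\Ls(y,f_M(x))\le\Ls(y,f(x))^+$ pointwise, and the dominating function is $P$-integrable by the choice of $f$; the reverse Fatou lemma then yields $\limsup_{M\to\infty}\mathcal{R}_{\X,\Ls,P}(f_M)\le\mathcal{R}_{\X,\Ls,P}(f)$. Since every $f_M$ is bounded, $\inf\{\mathcal{R}_{\X,\Ls,P}(g):g:\X\to\R\text{ bounded measurable}\}\le\mathcal{R}_{\X,\Ls,P}(f)$ for every such $f$; taking the infimum over $f$ gives $\inf\{\mathcal{R}_{\X,\Ls,P}(g):g\text{ bounded measurable}\}\le\mathcal{R}_{\X,\Ls,P}^*$, and the reverse is trivial, so equality holds. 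I expect this step to be the main obstacle: for heavy-tailed $P$ the function $f$ need not belong to $L^1(P^X)$ and $\mathcal{R}_{\X,\Ls,P}^*$ may even equal $-\infty$, and the convexity bound $\Ls(y,M)\le\tfrac{M}{f(x)}\Ls(y,f(x))$ is precisely what substitutes for an integrability assumption on $f$ and makes the limiting argument go through uniformly in $P$.

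For step (ii), let $g:\X\to\R$ be bounded measurable and $\varepsilon>0$; if $|L|_1=0$ then $L$, hence $\Ls$, is constant and the claim is trivial, so assume $|L|_1>0$. Since $g\in L^1(P^X)$ and $\mathcal{F}$ is dense in $L^1(P^X)$, choose $h\in\mathcal{F}$ with $\|g-h\|_{L^1(P^X)}<\varepsilon/|L|_1$. Both $\mathcal{R}_{\X,\Ls,P}(g)$ and $\mathcal{R}_{\X,\Ls,P}(h)$ are finite (Proposition~\ref{endlichkeiundlipschitzt}(c), using that $g$ is bounded and $h\in L^1(P^X)$), and by the Lipschitz continuity of $\Ls$ with constant $|L|_1$,
\[
\left|\mathcal{R}_{\X,\Ls,P}(g)-\mathcal{R}_{\X,\Ls,P}(h)\right|\le\int_{\X\times\Y}\left|\Ls(y,g(x))-\Ls(y,h(x))\right|\,dP(x,y)\le|L|_1\,\|g-h\|_{L^1(P^X)}<\varepsilon .
\]
Hence $\mathcal{R}_{\X,\Ls,P,\mathcal{F}}^*\le\mathcal{R}_{\X,\Ls,P}(h)<\mathcal{R}_{\X,\Ls,P}(g)+\varepsilon$; letting $\varepsilon\downarrow 0$ and then taking the infimum over all bounded measurable $g$ gives $\mathcal{R}_{\X,\Ls,P,\mathcal{F}}^*\le\inf\{\mathcal{R}_{\X,\Ls,P}(g):g\text{ bounded measurable}\}=\mathcal{R}_{\X,\Ls,P}^*$ by step (i). Together with the trivial inequality from the first paragraph this yields $\mathcal{R}_{\X,\Ls,P}^*=\mathcal{R}_{\X,\Ls,P,\mathcal{F}}^*$.
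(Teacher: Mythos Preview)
Your proof is correct. The paper's own proof is essentially a one-line citation: it writes $\mathcal{R}_{\X,\Ls,P}^*$ as an infimum, expands $\Ls(y,f(x))=L(y,f(x))-L(y,0)$, observes that the subtracted term $L(y,0)$ does not depend on $f$, and then invokes \fullciteA[Theorem~5.31, p.~190]{steinwart2008support} to replace the infimum over all measurable functions by the infimum over $\mathcal{F}$. Your argument, by contrast, is fully self-contained: you first reduce to bounded measurable functions via truncation, using convexity of $\Ls$ together with $\Ls(y,0)=0$ to obtain the integrable majorant $\Ls(y,f(x))^+$ needed for reverse Fatou, and then use the Lipschitz property to pass from bounded functions to the $L^1(P^X)$-dense class $\mathcal{F}$. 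In effect you are reproving (a form of) the cited theorem tailored to the shifted loss. What your route buys is transparency---it makes explicit exactly which structural features (convexity, Lipschitz continuity, $\Ls(\cdot,0)=0$) drive the result and handles the heavy-tailed case $\mathcal{R}_{\X,\Ls,P}^*=-\infty$ directly---at the price of greater length; the paper's route is shorter but opaque without access to the cited book.
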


\begin{proof}[Proof of Lemma~\ref{bayesinhilbert}]
We see that
\begin{align}
 \mathcal R_{\X,\Ls,P}^* & = \textup{inf} \left\{\ \left. \int\limits_{\X\times\Y} \Ls(y,f(x))\ dP(x,y) \right| f:\X\to\R, f \text{ measurable }\right\}  \notag\\
& =  \textup{inf} \left\{\ \left. \int\limits_{\X\times\Y} L(y,f(x)) - L(y,0) \ dP(x,y) \right| f:\X\to\R, f \text{ measurable }\right\}  \notag\\
& =  \textup{inf} \left\{\ \left. \int\limits_{\X\times\Y} L(y,f(x)) - L(y,0) \ dP(x,y) \right| f:\X\to\R, f\in \mathcal F\right\} \notag\\
& =   \mathcal R_{\X,\Ls,P,\mathcal F}^*.\notag
\end{align}
This is true due to \fullciteA[Theorem~5.31, p.~190]{steinwart2008support} and the fact, that there is no influence of $f$ on $L(\cdot, 0)$.

\end{proof}

\begin{proof}[Proof of Theorem~\ref{consistencytheorem}]

To get started, we decompose the difference of risks into three parts. To be able to do this we firstly have to approximate the Bayes risk.

For $\mathcal R_{\X,\Ls,P}^* = \inf\left\{\left.\mathcal R_{\X,\Ls,P}(f)\ \right|\ f:\X\to\R \text{ measurable} \right\}$ there exists (by the definition of the infimum) a sequence $(f_m^a)_{m\in\N}\subset \left\{ f:\X\to\R \text{ measurable} \right\}$ fulfilling $$\mathcal R_{\X,\Ls,P}(f_m^a) \mto \mathcal R_{\X,\Ls,P}^*.$$ Particularly, for all $\varepsilon>0$, there are indices $m_1,\ldots,m_B  \in\N$ such that $\left|\mathcal R_{\Xb,\Ls,\Pb}(f_{m_b}^a) - \mathcal R_{\Xb,\Ls,\Pb}^* \right| < \frac \varepsilon B$ for all $b\in\btoB$. Let $f^a := f_{\tilde m}^a$ for an $\tilde m \ge \max\left\{m_1,\ldots,m_B\right\}.$

Let us now have a view on the decomposition of the risks. We have
$$\left|\mathcal R_{\X,\Ls,P}(\fLDnlambdancomp) -  \mathcal R_{\X,\Ls,P}^*\right|$$
$$ \le \underbrace{\left|\mathcal R_{\X,\Ls,P}(\fLDnlambdancomp) -  \mathcal R_{\X,\Ls,P}(\fLPlambdancomp)\right|}_{\text{term 1}} + \underbrace{\left|\mathcal R_{\X,\Ls,P}(\fLPlambdancomp) -  \mathcal R_{\X,\Ls,P}(f^a)\right|}_{\text{term 2}} + \underbrace{\left|\mathcal R_{\X,\Ls,P}(f^a) -  \mathcal R_{\X,\Ls,P}^*\right|}_{\text{term 3}} .$$
Term 1 will be examined by stochastic methods, particularly by Hoeffding's concentration inequality on Hilbert spaces. The second term vanishes asymptotically which can be shown by applying the so called approximation error function. The convergence of term 3 to 0 follows directly from the definition of $f^a$.  By these arguments, we can prove stochastic convergence to 0.

For term 1 we find the following bound:

\begin{align}
&\;   \; \; \ \left|\mathcal R_{\X,\Ls,P}(\fLDnlambdancomp) -  \mathcal R_{\X,\Ls,P}(\fLPlambdancomp)\right| \label{consistency_term1} \\
& = \left|\  \int\limits_{\X\times\Y} \Ls\left(y,\fLDnlambdancomp(x)\right) - \Ls\left(y,\fLPlambdancomp(x)\right)\ dP(x,y) \ \right| \notag \\
& = \left|\  \int\limits_{\X\times\Y} L\left(y,\fLDnlambdancomp(x)\right) - L(y,0) - L\left(y,\fLPlambdancomp(x)\right) + L(y,0)\ dP(x,y) \ \right| \notag \\
& \le |L|_1 \int\limits_{\X\times\Y} \left|\fLDnlambdancomp(x) - \fLPlambdancomp(x)\right| \ dP(x,y)  \notag \\
& \le |L|_1    \int\limits_{\X\times\Y} \ \bsumme\ w_b(x)\  \left| \lokalempirischn(x) - \lokaltheoretischn(x) \right| \ dP(x,y) \notag \\
& = |L|_1  \ \bsumme\   \int\limits_{\X\times\Y} \ w_b(x)\  \left| \lokalempirischn(x) - \lokaltheoretischn(x) \right| \ dP(x,y) \notag \\
& = |L|_1  \ \bsumme\   \int\limits_{\Xb\times\Y} \ w_b(x)\  \left| \lokalempirischn(x) - \lokaltheoretischn(x) \right| \ dP(x,y) \notag \\
& \le |L|_1 \bsumme \ P(\Xb\times\Y)\ \| \lokalempirischn - \lokaltheoretischn\|_{\Xb\text{-}\infty}\notag\\
& \stackrel{\text{\small (\ref{dieungleichung})}}{\le} |L|_1 \bsumme \ P(\Xb\times\Y)\  \|k_b\|_{{\Xb\text{-}\infty}} \    \| \lokalempirischn - \lokaltheoretischn\|_{H_b}.\notag
\end{align}

For all $b\in\btoB$, the last factor $\left|\left| \lokalempirischn -  \lokaltheoretischn\right|\right|_{H_{b}}$ converges to $0$ in probability (with respect to $P_b$) if $n_b\to\infty$, which is guaranteed if $n\to\infty$ by \textbf{(R3)}. Hence, the whole expression and therefore the difference in (\ref{consistency_term1}) converges to $0$ in probability (with respect to  $P$). This can be shown as follows. For every $b\in\btoB$ \fullciteA[Theorem~7]{christmann2009consistency} guarantees for every $n\in\N$ the existence of a bounded, measurable function $h_{(n,b)}:\Xb\times\Y\to \R$ with $||h_{(n,b)}||_{\Xb\text{-}\infty} \le |L|_1$ and, furthermore, for every probability measure $\widetilde \Pb$ on $\Xb\times\Y$:
$$\left|\left|f_{b,\Ls,\Pb,\lambdanb} - f_{b,\Ls,\widetilde\Pb,\lambdanb}\right|\right|_{H_{b}} \le \frac 1 \lambdanb \left|\left| \E_\Pb\left[ h_{(n,b)}\Phi_b\right] - \E_{\widetilde\Pb}\left[ h_{(n,b)}\Phi_b\right]\right|\right|_{H_{b}},$$
where $\Phi_b:\Xb\to H_{b}, \ \Phi_b(x) = k_b(\cdot,x)$ for all $x\in\Xb.$ Let $\varepsilon_b\in \ ]0,1[.$ For this proof, we are particularly interested in $\widetilde \Pb := \frakDnb$.  Let
$$S_ {(n,b)} := \left\{\calDnb\in (\Xb\times\Y)^\nb \left|   \left|\left| \E_\Pb\left[ h_{(n,b)}\Phi_b\right] - \E_{\frakDnb}\left[ h_{(n,b)}\Phi_b\right]\right|\right|_{H_{b}} \le \frac {\lambdanb\varepsilon_b}{|L|_1}   \right.\right\},$$
where $\nb := |\frakDnb|$. 
For all $\calDnb\in S_ {(n,b)}$ it follows that
$$ \left|\left| \lokalempirischn -  \lokaltheoretischn\right|\right|_{H_{b}} \le \varepsilon_b.$$
Let us now have a look at the probability of $ S_ {(n,b)}$. As shown in  \fullciteA[p.~323]{christmann2009consistency} by using Hoeffding's inequality in Hilbert spaces, $\Pb^\nb ( S_ {(n,b)} ) \to 1$ for $\nb\to\infty$. Hence,
\begin{align}
\left|\left| \lokalempirischn -  \lokaltheoretischn\right|\right|_{H_{b}}\to 0 \ \ \text{in probability (with respect to  $\Pb$)}, \ \ \nb\to\infty.\notag
\end{align}

We now know for all $b\in\btoB$ that $\left|\left| \lokalempirischn - \lokaltheoretischn \right|\right|_{H_{b}}\to 0$ in probability with respect to $\Pb$ when $\nb\to\infty$. Therefore, term 1 vanishes, i.\,e. $$|\mathcal R_{\X,\Ls,P}(\fLDnlambdancomp) -  \mathcal R_{\X,\Ls,P}(\fLPlambdancomp)| \nto 0$$ in probability with respect to $P$.

\vspace*{0.5cm}


It remains the investigation of the asymptotic behaviour of the second term where we use the convexity of $L$:

\begin{align}
&\ \ \ \,  \left|\mathcal R_{\X,\Ls,P}(\fLPlambdancomp) -  \mathcal R_{\X,\Ls,P}(f^a)\right| \notag\\
& = \left| \ \int\limits_{\X\times\Y} \ L\left(y, \fLPlambdancomp(x)\right) - L\left(y,f^a(x)\right)\ dP(x,y)\right| \notag\\
& = \left| \ \int\limits_{\X\times\Y} \ L\left(y, \bsumme w_b(x) \lokaltheoretischn(x)\right) - L\left(y,f^a(x)\right)\ dP(x,y)\right|\notag\\
& = \left| \ \int\limits_{\X\times\Y} \ L\left(y, \bsumme w_b(x) \lokaltheoretischn(x)\right) - \underbrace{\left[ \bsumme w_b(x) \right]}_{ = 1}  L\left(y,f^a(x)\right)\ dP(x,y)\right|\notag\\
& \stackrel{\text{\small $L$ convex}} {\le}  \left| \ \int\limits_{\X\times\Y} \ \bsumme w_b(x)  L\left(y, \lokaltheoretischn(x)\right) -  \bsumme w_b(x)   L\left(y,f^a(x)\right)\ dP(x,y)\right|\notag
\end{align}
\begin{align}
& = \left| \Isumme\ \int\limits_{\XI\times\Y} \ \bsumme w_b(x)  \left(L(y, \lokaltheoretischn(x)) -   L\left(y,f^a(x)\right)\right)\ dP(x,y)\right|\notag\\
& = \left| \Isumme\ \bsumme\ \int\limits_{\XI\times\Y} \  w_b(x)  \left(L(y, \lokaltheoretischn(x)) -   L\left(y,f^a(x)\right)\right)\ dP(x,y)\right|\notag\\
& \stackrel{\text{\small (W2)}}{=} \left| \Isumme\ \sum\limits_{b\in I}\ \int\limits_{\XI\times\Y} \  w_b(x)  \left(L(y, \lokaltheoretischn(x)) -   L\left(y,f^a(x)\right)\right)\ dP(x,y)\right|\notag\\
& \le \left| \Isumme\ \sum\limits_{b\in I}\ \int\limits_{\Xb\times\Y} \  w_b(x)  \left(L(y, \lokaltheoretischn(x)) -   L\left(y,f^a(x)\right)\right)\ dP(x,y)\right|\notag\\
& \le  \Isumme\ \sum\limits_{b\in I}\  P(\Xb\times\Y) \left|  \mathcal R_{\Xb,L,\Pb}(\lokaltheoretischn) - \mathcal R_{\Xb,L,\Pb}(f^a)    \right|\notag\\
& \le  \Isumme\ \sum\limits_{b\in I}\  P(\Xb\times\Y) \left|  \mathcal R_{\Xb,L,\Pb}(\lokaltheoretischn) - \mathcal R_{\Xb,L,\Pb}^*    \right|.\notag
\end{align}

Looking at the differences of these local risks, we define, for every $b\in\btoB$ and for every $f\in H_b$, the so called approximation error function 
\begin{align}\label{gleichheitapprox}
A_{b,f}: [0,\infty[ \to\R, \ \ \ \lambda \mapsto \mathcal R_{\Xb,\Ls,\Pb,\lambdab}(f) - \mathcal R_{\Xb,\Ls,\Pb,H_b}^* = \mathcal R_{\Xb,\Ls,\Pb}(f) + \lambdab||f||^2_{H_b} - \mathcal R_{\Xb,\Ls,\Pb,H_b}^*.
\end{align}

From \fullciteA[Lemma A.6.4, p.~520]{steinwart2008support} we know that, if we consider sequences $(\lambdanb)_{\nb\in\N}$ converging to 0, $\lambdanb>0$ for all $n_b \in\N$ and for all $b\in\btoB$, $${\underset{f\in H_b}{\inf}} A_{b,f}(\lambdanb) \to {\underset{f\in H_b}{\inf}} A_{b,f}(0) \ \ \ \text{for $\nb\to\infty$.}$$ As the SVM is defined as a minimizer of the regularized risk, for every $n\in\N, b\in\btoB$,
$${\underset{f\in H_b}{\inf}} A_{b,f}(\lambdanb) = \mathcal R_{\Xb,\Ls,\Pb}(\lokaltheoretischn) + \lambdanb||\lokaltheoretischn||^2_{H_b} -  \mathcal R_{\Xb,\Ls,\Pb,H_b}^*.$$
Due to \fullciteA[Theorem 5.31, p.~190]{steinwart2008support} and (\ref{gleichheitapprox}) we know that ${\underset{f\in H_b}{\inf}} A_{b,f}(0) = 0.$ Hence, we obtain
\begin{align}
0 & \le  \underset{\nb\to\infty}\limsup\left(  \mathcal R_{\Xb,\Ls,\Pb,\lambdab}(\lokaltheoretischn) -  \mathcal R_{\Xb,\Ls,\Pb,H_b}^*\right)  \notag  \\ 
   & =  \underset{\nb\to\infty}\limsup\left(  \mathcal R_{\Xb,\Ls,\Pb}(\lokaltheoretischn) + \lambdanb||\lokaltheoretischn||^2_{H_b} -  \mathcal R_{\Xb,\Ls,\Pb,H_b}^*\right) \notag  \\ 
   & \le \underset{\nb\to\infty}\limsup\left( {\underset{f\in H_b}{\inf}} A_{b,f}(\lambdanb) - {\underset{f\in H_b}{\inf}} A_{b,f}(0)\right) = 0.\notag
\end{align}
Hence, the differences to the smallest risk over $H_b$ vanish. We know that according to Lemma \ref{bayesinhilbert} for every $b\in\btoB$ it holds that $ \mathcal R_{\Xb,\Ls,\Pb}^* =  \mathcal R_{\Xb,\Ls,\Pb,H_b}^*.$
Using this fact, we get that all the local differences to the Bayes risk vanish. Hence, $|\mathcal R_{\X,\Ls,P}(\fLPlambdancomp) -  \mathcal R_{\X,\Ls,P}^*|   \to 0$ for $n\to\infty$.

\vspace*{0.5cm}

Summarizing our results to the terms 1 to 3, $|\mathcal R_{\X,\Ls,P}(\fLDnlambdancomp) -  \mathcal R_{\X,\Ls,P}^*| \to 0$ in probability (with respect to the unknown distribution $P$).
\end{proof}

\begin{proof}[Proof of Theorem~\ref{robustnesstheorem}]
The result is shown as follows, only using the triangle inequality, upper bounds for suprema, the assumed properties of the weights of the local SVMs and a well-known inequality in reproducing kernel Hilbert spaces, see Proposition~\ref{dieungleichungschlechthin}. We denote the total variation norm, see for example \fullciteA[p.~158]{DenkowskiMigorskiPapageorgiou2003}, on $\Xb$ by $\|\cdot\|_{\Xb\text{-TV}}$. Then we get
\begin{align}
\underset{\tilde P \in N_{\varepsilon}(P)}{\sup} \ \left\|\fLPschlangelambdacomp - \fLPlambdacomp\right\|_{\infty} & = \underset{\tilde P \in N_{\varepsilon}(P)}{\sup} \ \underset{x\in\X}{\sup}\  \left| \bsumme\ w_b(x)\left(\lokaltheoretischschlange(x)-\lokaltheoretisch(x)\right)\right| \notag \\
&\le \underset{\tilde P \in N_{\varepsilon}(P)}{\sup} \ \underset{x\in\X}{\sup}\  \bsumme\  \left| w_b(x)\left(\lokaltheoretischschlange(x)-\lokaltheoretisch(x)\right)\right| \notag \\
&\le \underset{\tilde P \in N_{\varepsilon}(P)}{\sup} \ \bsumme\   \underset{x\in\X}{\sup}\  \left| w_b(x)\left(\lokaltheoretischschlange(x)-\lokaltheoretisch(x)\right)\right| \notag \\
&\le \underset{\tilde P \in N_{\varepsilon}(P)}{\sup} \ \bsumme\ \|w_b\|_{\Xb\text{-}\infty}\   \underset{x\in\Xb}{\sup}\  \left| \lokaltheoretischschlange(x)-\lokaltheoretisch(x)\right| \notag \\
&= \underset{\tilde P \in N_{\varepsilon}(P)}{\sup} \ \bsumme\ \|w_b\|_{\Xb\text{-}\infty}\   \left\| \lokaltheoretischschlange-\lokaltheoretisch\right\|_{\Xb\text{-}\infty} \notag \\
&\le \bsumme\  \|w_b\|_{\Xb\text{-}\infty}\  \underset{\tilde P \in N_{\varepsilon}(P)}{\sup} \    \left\| \lokaltheoretischschlange-\lokaltheoretisch\right\|_{\Xb\text{-}\infty} \notag \\
&\le \bsumme\  \|w_b\|_{\Xb\text{-}\infty}\  \underset{\tilde \Pb \in N_{b,\varepsilon_b}(\Pb)}{\sup} \    \left\| \lokaltheoretischschlange-\lokaltheoretisch\right\|_{\Xb\text{-}\infty} \notag \\
& \stackrel{\text{\small (\ref{dieungleichung})}}{\le} \bsumme\  \|w_b\|_{\Xb\text{-}\infty}\  \underset{\tilde \Pb \in N_{b,\varepsilon_b}(\Pb)}{\sup} \    \left\|k_b\right\|_{\Xb\text{-}\infty}\left\| \lokaltheoretischschlange-\lokaltheoretisch\right\|_{H_b} \notag \\
&\le \bsumme\  \|w_b\|_{\Xb\text{-}\infty}\  \underset{\tilde \Pb \in N_{b,\varepsilon_b}(\Pb)}{\sup} \    \left\|k_b\right\|_{\Xb\text{-}\infty} \varepsilon_b \frac 1 \lambda_b |L|_1 \left\|k_b\right\|_{\Xb\text{-}\infty} \|\tilde \Pb - \Pb\|_{\Xb\text{-TV}} \notag \\
&\le 2 \ |L|_1\ \bsumme\  \|w_b\|_{\Xb\text{-}\infty}\  \frac{\varepsilon_b }{\lambda_b}\ \left\|k_b\right\|_{\Xb\text{-}\infty}^2. \notag
\end{align}\notag

The second to last inequality is valid due to \fullciteA[Theorem~12, p.~316]{christmann2009consistency}, the last inequality due to the fact that the total variation norm of the difference of two probability measures can be bounded to 2 by the triangle inequality.
\end{proof}

\section{About shifted loss functions}\label{shiftanhang}

To explain why we like to use the shifted loss function instead of the loss function itself, we just have to look at the idea of SVMs. According to the  definition an SVM is a minimizer of a regularized risk. If the considered risk is $\infty$, there is nothing to be minimized. As shown in \fullciteA{christmann2009consistency}, there are two conditions for the finiteness of the risk of an SVM because for a Lipschitz-continuous loss function $L$ with Lipschitz-constant $|L|_1$ and under the assumption that $P$ can be split up into the marginal distribution $P^X$ and the regular conditional probability $P(y|x)$ we obtain
$$\left|\mathcal R_{L,P}(f)\right| = \left|\int\limits_{\X\times\Y} L(x,y,f(x)) \ dP(x,y)\right| = \left|\int\limits_{\X\times\Y} L(x,y,f(x))-\underbrace{L(x,y,y) }_{=0}\ dP(x,y)\right| \le$$
$$\le |L|_1  \int\limits_\X \int\limits_\Y |f(x)-y| \ dP(y|x) \ dP^X(x)\le |L|_1  \int\limits_\X  |f(x)| \  dP^X(x) + |L|_1  \int\limits_\X \int\limits_\Y |y| \ dP(y|x) \ dP^X(x).$$
We see that this expression is finite if $f\in L^1(P^X)$ \emph{and} $\int\limits_\X \int\limits_\Y |y| \ dP(y|x) \ dP^X(x) = \E_Q|Y|$ is finite. This might be an issue, if $\mathcal Y$ is unbounded, as, e.\,g. in general regression problems. Distributions without a finite first absolute moment, like, e.\,g. the Cauchy distribution, are therefore excluded if we use the loss function $L$ itself. Using the shifted version $\Ls$ results into
$$\left|\mathcal R_{\Ls,P}(f)\right| \le \int\limits_{\X\times\Y} \left|L(x,y,f(x))-L(x,y,0)\right| \ dP(x,y)\le |L|_1 \int\limits_\X |f(x)| \ dP^X(x),$$
which is finite if only $f\in L^1(P^X)$. Note that there is no moment condition for $Y$ anymore. Also note that this is \fullciteA[Proposition 3(ii), p.~314]{christmann2009consistency}.

Even if the considered loss function is not Lipschitz-continuous, shifting might be useful. To demonstrate this, let us have a look at the (only locally Lipschitz-continuous) least squares loss function which is defined by $L_{LS}(x,y,t) := (y-t)^2.$ The corresponding risk is, provided the decomposition of $P$ is allowed,  given by
\begin{align}
\mathcal R_{ {\X, L_{LS}},P }(f) & = \int\limits_\X\int\limits_\Y  \bigl(y-f(x)\bigr)^2\ dP(y|x)\ dP^X(x) = \notag \\
& = \int\limits_\X\int\limits_\Y  y^2-2yf(x) + f(x)^2\ dP(y|x)\ dP^X(x) = \notag \\
& = \int\limits_\X\ \ \ \left(\int\limits_\Y  y^2\ dP(y|x) -2f(x)\int\limits_\Y y  \ dP(y|x)\right) \ \ \ + f(x)^2\ dP^X(x) = \notag \\
& = \int\limits_\X\ \ \ \left(\ \int\limits_\Y  y^2\ dP(y|x) -2f(x)\int\limits_\Y y  \ dP(y|x)\ \right) \ \ \ dP^X(x) \ \ \ + \int\limits_\X f(x)^2\ dP^X(x) .\notag
\end{align}
We see that we obtain a well defined and finite risk if $f\in L^2(P^X)$ and, additionally, $\int\limits_\X \int\limits_\Y y^2 \ dP(y|x) \ dP^X(x) = \E_Q(Y^2)$ is finite. Shifting $L_{LS}$ leads to $\Ls_{LS}(x,y,t) = (y-t)^2-y^2$ with the corresponding risk
\begin{align}
\mathcal R_{\Ls_{LS},P}(f) & = \int\limits_\X\int\limits_\Y y^2-2yf(x)+f(x)^2-y^2\ dP(y|x)\ dP^X(x) = \notag \\
& = \int\limits_\X \left(-2f(x)\int\limits_\Y y \ dP(y|x) + f(x)^2\right) \ dP^X(x) \notag
\end{align}
which is finite and well defined if $f\in L^2(P^X)$ and $\E_Q(Y)$ is finite. Thus, using the shifted loss least squares loss function reduces the moment condition on $Y$ from a second order to first order condition.

The usage of shifted loss functions was, anticipatory and for M-estimators, introduced by \fullciteA{huber1967behavior}.

\end{appendix}

\bibliographystyle{newapa}
\renewcommand{\bibname}{References}
\renewcommand{\refname}{References}
\addcontentsline{toc}{chapter}{\bibname}

\end{document}